%% file: neurips_2026.tex
\documentclass{article}

\usepackage[preprint]{neurips_2026}
\usepackage{algorithm}
\usepackage{algpseudocode}
\input{math_commands.tex}

\usepackage[utf8]{inputenc} 
\usepackage[T1]{fontenc}    
\usepackage{hyperref}       
\usepackage{url}            
\usepackage{booktabs}       
\usepackage{amsfonts}       
\usepackage{nicefrac}       
\usepackage{microtype}      
\usepackage{xcolor}         

\usepackage{hyperref}
\usepackage{url}
\usepackage{microtype}
\usepackage{graphicx}
\usepackage{subcaption}
\usepackage{booktabs} 
\usepackage{amsmath}
\usepackage{amsthm}
\newtheorem{definition}{Definition}

\newtheorem{proposition}{Proposition}
\newtheorem{example}{Example}

\theoremstyle{remark}
\newtheorem{remark}{Remark}
\usepackage{comment}
\usepackage{amsmath}
\usepackage{amssymb}
\usepackage{mathtools}
\usepackage{amsthm}

\title{When Agents Say One Thing and Do Another: Validating Elicited Beliefs from LLMs}

  \author{
Khurram Yamin$^{1}$\thanks{Correspondence to: \texttt{kyamin@andrew.cmu.edu}}
\quad
Jingjing Tang$^{1}$
\quad
Santiago Cortes-Gomez$^{1}$ \\
\textbf{Amit Sharma$^{2}$
\quad
Eric Horvitz$^{2}$
\quad
Bryan Wilder$^{1}$} \\
$^{1}$Carnegie Mellon University
\qquad
$^{2}$Microsoft Research
}

\begin{document}

\maketitle

\begin{abstract}
Large language models (LLMs) are increasingly deployed in high-stakes settings where good decisions require forming beliefs over the probability of unknown outcomes. However, it is unclear whether LLMs act as if they hold coherent beliefs when making decisions, or if so, how we could validate models' reports of such beliefs. We propose a decision-theoretic framework that elicits both probability judgments and decisions from an agent and tests their mutual consistency. Formally, our methods characterize whether it is possible for the actions to be produced by a ``near-rational" decision maker who holds the elicited probability as their true belief. We show that, perhaps surprisingly, this formalization implies empirically testable conditions even without any assumption about the agent's utility function. Applying our framework to stylized clinical diagnosis tasks, we find that models' reported beliefs are demonstrably imperfect summaries of the information revealed in their decisions, but that the discrepancies are small for the strongest models. 
\end{abstract}

\section{Introduction}

LLMs are increasingly asked to make decisions under uncertainty, where success requires forming probabilistic beliefs about unknowns in the world. For example, agents for medical tasks must reason about a patient's latent health state in order to trade off the costs and benefits of different actions. How can we tell whether an LLM acts as if it holds coherent beliefs when making decisions? We could prompt the model to output a belief -- e.g.\ the probability that the patient has a particular diagnosis -- but it is entirely possible that the model's response is simply an individual text output disconnected from any other output or decision the model might make.  Even defining what it could mean for an LLM to hold a belief is highly nontrivial. Nevertheless, validated methods for belief elicitation would be highly useful. For example, elicited beliefs would allow LLMs to provide a legible explanation of why they make particular decisions, an important component of safety and transparency in high-stakes settings. They would also enable diagnosis of failures. For example, a model developer could tell whether a medical agent fails to ask a patient followup questions \cite{liu2024large,johri2025evaluation} because it holds overconfident beliefs about the diagnosis, or because it perceives a cost to burdening the user with questions, and use this understanding to design appropriate remedies.             

Recent work has largely focused on the accuracy and calibration of expressed probabilities \citep{ulmer2024calibrating, wang2024calibrating,cruz2024evaluating}. These questions are important, but they are conceptually distinct from the one we study here. On the one hand, a decision maker may hold inaccurate or miscalibrated beliefs in good faith. On the other hand, an LLM may express probabilities that fail to reflect the quantities actually governing its decisions. For example, finetuning or preference optimization may induce the model to verbalize systematically distorted probabilities.

We develop formally-grounded strategies for validating whether probabilistic beliefs elicited from LLMs are a useful explanation for their decisions. Our strategy centers on jointly eliciting beliefs and decisions in tasks where optimal actions depend on the relevant probabilities. We propose two characteristics that elicited beliefs should satisfy in order to be a meaningful description of how an agent makes decisions. First, they should be a \textit{sufficient statistic} which encapsulates all decision-relevant information the agent has about the unknowns. Second, they should be \textit{action-guiding} in the sense that the agent's probability of selecting different actions shifts systematically in response to the elicited belief. We provide a mathematical formalization of these conditions and show how each can be empirically tested in a black-box manner using only data on elicited beliefs and decisions from an agent. We then show a specific sense in which these conditions fully characterize what is possible for black-box testing of elicited beliefs: they are provably necessary and sufficient for there to exist an agent within an expansive class of ``near-rational" decision-makers who holds the elicited belief as a true subjective probability and is observationally equivalent to the LLM.

Finally, we instantiate this framework empirically across multiple LLM families on clinically grounded diagnostic tasks spanning four medical settings. We find that elicited beliefs are provably imperfect, in that models have additional decision-relevant information about a patient's diagnosis which is revealed in their actions but not communicated in the elicited belief. However, the degree of discrepancy is relatively small for the strongest models, indicating that they are -- at least observationally -- close to an agent whose actions are coherently attributable to their elicited beliefs.


\section{Related Work}

\textbf{Belief elicitation from humans:} A long literature in psychology and economics seeks to elicit subjective probabilities, often using monetary incentives to encourage truthful reporting. Reviews \cite{schlag2015penny,charness2021experimental} emphasize that validation is difficult and arguably ill-posed: beliefs are unobservable and may be shaped by elicitation itself, so methods are often judged by how well elicited beliefs predict actions. Our approach is similar in spirit, but LLMs pose different constraints. Human studies typically assume preferences, such as more money being better, to design proper scoring rules \cite{savage1971elicitation,gneiting2007strictly} or test best responses in games \cite{nyarko2002experimental,rey2009equilibrium}. For LLMs, assuming a specific utility is strong and hard to justify, and credibly incentivizing a model in simulation is difficult. We therefore remain agnostic about the model’s utility. Finally, whereas belief elicitation for humans can ``contaminate'' subsequent actions \cite{croson2000thinking,blanco2010belief}, we avoid this for LLMs by eliciting beliefs and actions in separate context windows.

\textbf{Belief elicitation from LLMs:} With the advent of large language models there has been renewed interest in characterizing LLM beliefs. \citet{herrmann2024standards} provide a philosophical account, and our work can be seen as operationalizing their ``use" criterion. Other work has studied how to elicit priors from LLMs \cite{zhu2024eliciting} and whether LLMs express beliefs that are consistent with probabilistic axioms \cite{zhu2025incoherentprobabilityjudgmentslarge,freedman2025exploring}. Our work introduces a new set of tools to test coherence of beliefs in comparison to decisions. Perhaps the most related is work by \citet{pal2025incoherentbeliefsinconsistent}, who test whether LLMs make bets on events which go in the same direction as their beliefs. However, they do not introduce a formal framework for testing when beliefs and actions should correspond in a specific way. Further afield, an emerging literature tests whether LLMs' outputs are consistent with various cognitive biases seen in humans \cite{echterhoff2024cognitive,binz2023using,cheung2025large} but this work does not focus on validity of elicited beliefs.



\textbf{Mechanistic interpretability:}
A central motivation for our study is the concern that LLMs may know more than
what they reveal in verbalized probabilities. Mechanistic interpretability uses
access to models' internal representations to test such questions
\cite{sharkey2025open,bereska2404mechanistic}; for example, probes can predict
distinctions such as true vs.\ false claims from activations
\cite{azaria2023internal,marks2023geometry}. We view this work as complementary:
our tests are fully black-box and output-only, making them useful for closed
models, while many mechanistic approaches (e.g., linear probes) require
supervised labels, injecting external information and complicating a direct
interpretation of their outputs as beliefs of the model. Given the difficulty of
formally defining ``belief,'' both toolkits are valuable.

\textbf{Medical applications:}
A growing body of work evaluates large language models in clinically consequential settings, emphasizing that downstream quality depends on decision policies rather than static accuracy alone \citep{singhal2024expert,Gaber2025,Hager2024clinicalevaluation,Williams2024edllm}.   Clinical evaluations further stress that models may lack reliable metacognitive awareness of their own limitations, making naive self-reported confidence insufficient for safe triage \citep{Griot2025metacognition,Hager2024clinicalevaluation}. Evaluations have also stressed difficulties in seeking information when necessary to make a diagnosis, perhaps reflecting incorrect probabilistic reasoning \cite{johri2025evaluation,li2024mediq}. Our aim is to provide a principled framework within which to understand how models' understanding of uncertainty relates to their decisions.

\section{Methodology}
\label{sec:methodology}

\subsection{Decision-Theoretic Framework}
\label{subsec:framework}

We formalize the decision problem as follows. An environment generates an unknown state of the world
$\theta \sim P^\star(\theta)$ and observations $x \sim P^\star(x\mid \theta)$. The induced \emph{ground-truth}
posterior under the environment is $P^\star(\theta\mid x)$. After observing $x$, an agent forms its
own (potentially misspecified) \emph{subjective} posterior belief $P_S(\theta\mid x)$, which need not equal
$P^\star(\theta\mid x)$. In order for a subjective probability to have empirically testable content, it must be related to the agent's actions in some way. In the classic framework of expected utility maximization, the agent selects an action $a\in\mathcal{A}$ to maximize expected utility: $a(x)
\;=\;
\argmax_{a\in\mathcal{A}}\; \sum_{\theta \in \Theta} P_S(\theta|x) u(a,\theta).$
This optimization requires two components: (1) \textit{beliefs} about the unknown state---captured here by the
subjective posterior $P_S(\theta\mid x)$---and (2) \textit{preferences} over outcomes---captured here by the
utility $u$. In this paper, our goal is to formulate empirically testable conditions for an agent's \textit{reported} belief to represent a subjective probability in the sense that it represents a real account of why the agent acts the way that it does. Expected utility maximization is one framework linking beliefs to actions, but may represent too stringent a formal model on its own -- an agent could be usefully described as having beliefs even if it fails to exactly maximize a specific utility function. We instead propose conditions which can be justified in a broader sense. While we argue that these conditions are independently meaningful apart from any specific mathematical model, they can be formally grounded in an expansive class that captures ``near-rational" decision making. Let $q_a(x)$ denote the probability that the agent takes action $a$ after observing $x$. 
\begin{definition}
    An agent is a perturbed utility maximizer if, for some utility
    function $u:\mathcal A\times\Theta\to\mathbb R$ that is non-degenerate in
    relative action utilities (i.e., there exist $x,x'$ in the support of $X$
    and distinct $a,a'\in\mathcal A$ such that
    $\sum_{\theta\in\Theta}P_S(\theta\mid x)[u(a,\theta)-u(a',\theta)]
    \neq
    \sum_{\theta\in\Theta}P_S(\theta\mid x')[u(a,\theta)-u(a',\theta)]$)
    and some strictly convex regularizer
    $C:\Delta(\mathcal A)\to\mathbb R$, its choice-probability vector
    $q(x)\in\Delta(\mathcal A)$ satisfies
    \[
        q(x)
        \in
        \argmax_{\lambda \in \Delta(\mathcal A)}
        \left\{
        \sum_{a \in \mathcal A}\lambda_a
        \left(\sum_{\theta \in \Theta} P_S(\theta\mid x) u(a,\theta)\right)
        - C(\lambda)
        \right\}.
    \]
    The realized action is then drawn according to this choice rule:
    $\Pr(A=a_i\mid x,\theta)=q_{a_i}(x)$.
    \label{definition:perturbed-max}
\end{definition}
Perturbed utility maximization originated in the economics literature as a generalization of traditional choice models \citep{hofbauer2002global}. The regularizer $C$ allows this class to express a wide variety of near-utility maximizing behaviors where agents act with stochasticity, with unobserved components in their utility function, or to ensure robustness to different preferences. For example:
\begin{example}
If $C(q) = \sum_{a \in \mathcal{A}} q_a(x)\log q_a(x)$ is the negative entropy, then the choice probabilities are a softmax over expected utilities, i.e. the well-known logit model \citep{mcfadden1973conditional,train2009discrete}.
\end{example}
\begin{example}
    More generally, \cite{hofbauer2002global} show any \textit{random utility model} satisfies Definition \ref{definition:perturbed-max}. Random utility models are the most common class of discrete choice models, and model the agent as maximizing $u(a,\theta) + \epsilon_a$ for a random shock $\epsilon_a$ to account for unobserved variables impacting the agent's preferences.
\end{example}
\begin{example}
    Taking $q_0$ to be a baseline policy and $C = D_{KL}(q||q_0)$ to be the KL divergence from $q_0$ recovers an agent who trades off expected utility against deviation from a default policy (as in many post-training pipelines for LLMs \citep{ouyang2022traininglanguagemodelsfollow}). 
\end{example}
Other examples occur across the economic literature; e.g., \cite{fudenberg2015stochastic} show that Definition \ref{definition:perturbed-max} encompasses ambiguity-averse agents that maximize their worst-case expected utility over a set of different scenarios for the utility function, while \cite{fosgerau2020discrete} show that it can be used to express common models of rational inattention. Accordingly, formally grounding empirical tests of belief consistency with respect to Definition \ref{definition:perturbed-max} allows us to express properties that should relate beliefs and actions for a very broad set of agents that deliberately pursue goals in some coherent way, without requiring strict expected utility maximization. One deliberate choice is that we restrict the utility $u$ to depend only on $a$ and $\theta$, not $x$. Otherwise any action distribution $a(x)$ is trivially rationalizable (e.g., by taking $u(a,\theta,x) = 1[a = a(x)]$) without requiring the agent to have any beliefs about $\theta$ at all. Effectively, there is no observable distinction between an agent whose preferences correlate with $\theta$ via $x$ and an agent who has beliefs about $\theta$ and so we choose to interpret the component of the agent's decision-making which is able to encode predictive information about $\theta$ as part of the belief state.

\label{subsec:prospect_connection}

\subsection{Testable Implications for Belief Elicitation}
A central objective of this paper is to test whether it is possible for a model's verbalized probabilities to reflect \emph{subjective} belief states that meaningfully drive its choices. We will refer to the probability judgment \textit{elicited} from a model by some method (e.g., prompting) as $P_E(\theta\mid x)$. Our goal is to test whether it is possible for a (near) rational decision maker to hold those judgments as true subjective beliefs, $P_S(\theta|x) = P_E(\theta\mid x)$, and still take the same actions as the model. We will construct a framework to test this in the context of a specific decision problem, where we assume that the experimenter has samples from the true distribution over $(x, \theta)$. 

We gather data in two steps. First, on a sample of values of $x$, we
elicit $P_E(\theta\mid x)$ using whatever method we wish to test. In our
experiments, we prompt the model to report a probability distribution
$P_E(\theta\mid x)$ in natural language. Specifically,
for each context $x$ in our evaluation set, we query the model for probability
estimates for each possible state. For the tasks we consider, the state space is
binary ($\theta \in \{0,1\}$, indicating absence or presence of a condition), so
we elicit $p(x) := P_E(\theta = 1\mid x)$ directly as a numerical probability.
However, our methods apply to any elicitation method. We focus on
natural-language probability estimates because of their simplicity and the wide
literature investigating them. For notation, we often use
$p(x) := P_E(\theta = 1\mid x)$ or $p^*(x) := P^*(\theta = 1\mid x)$ in
shorthand for the binary case; we use $P_E(\theta \mid x)$ or
$P(\theta \mid x)$ when results generalize beyond the binary case. Second, we prompt the model with the same $x$ in a separate context and ask it
to take an action $a$. This generates a sample of the form
$(x_i, P_E(\theta = 1\mid x_i), a_i, \theta_i)$, where we observe for each
$x_i$ the model's elicited belief, its action, and the true state (which was not
revealed to the model). Finally, we apply tests examining properties that the
sample distribution should satisfy if the elicited beliefs play a meaningful
role in action. Violations quantify the extent to which elicited beliefs depart
from a useful interpretation as true subjective probabilities. We next detail
these tests.

\label{subsec:belief_elicitation}

\label{subsec:ci_test}
\paragraph{Conditional independence of actions and outcomes:} Intuitively, a valid report of a belief should summarize all of the decision-relevant information that the agent possesses about the state $\theta$. Formally, once we condition on the model's subjective belief, the agent's action $a$ should not provide any additional information about the true outcome $\theta$: $P_S$ is a sufficient statistic for the impact of the distribution of future events on the agent's choices. If a rational agent's actions do have excess correlation with the true state given their elicited $P_E$, the agent must ``know more than they say". 




\begin{proposition}[Conditional Independence under Truthful Reporting]
\label{thm:ci_rum}
For any agent satisfying Definition \ref{definition:perturbed-max}, their subjective beliefs and actions satisfy $a \perp \theta \mid P_S(\theta\mid x)$. Conversely, if for a set of elicited beliefs $P_E$, $a \not\perp \theta \mid P_E(\theta|x)$, then there is no agent satisfying Definition \ref{definition:perturbed-max} (under \emph{any} utility function $u$ and regularizer $C$) who holds subjective belief $P_S = P_E$. 
\end{proposition}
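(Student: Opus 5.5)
The first claim follows from two observations: under Definition~\ref{definition:perturbed-max} the choice-probability vector $q(x)$ depends on $x$ only through the belief vector $P_S(\cdot\mid x)$, and the realized action is drawn from $q(x)$ independently of $\theta$ given $x$. The second claim is then the contrapositive, applied with $P_S = P_E$. The plan is therefore to (i) show $q(x)=g(P_S(\cdot\mid x))$ for a deterministic map $g$, (ii) use the sampling rule to obtain $a\perp\theta\mid x$, and (iii) combine these by iterated conditioning to pass from conditioning on $x$ to conditioning on $P_S(\cdot\mid x)$.

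\textbf{Step (i).} For fixed $u$ and $C$, define for each $\pi\in\Delta(\Theta)$ the objective
\[
F_\pi(\lambda)=\sum_a \lambda_a \sum_\theta \pi(\theta)u(a,\theta)-C(\lambda).
\]
This is a linear function minus a strictly convex function, so it is strictly concave on the compact convex set $\Delta(\mathcal A)$. Its maximizer is unique provided a maximum exists; existence can be assumed as part of the definition, since the agent's $q(x)$ is stipulated to lie in the $\argmax$. Set $g(\pi)$ to be this unique maximizer. Because $u$ and $C$ do not depend on $x$ (the restriction the paper deliberately imposes), we get $q(x)=g(P_S(\cdot\mid x))$. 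The non-degeneracy condition on $u$ plays no role in this direction.

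\textbf{Steps (ii) and (iii).} Write $B=P_S(\cdot\mid x)$, viewed as a random variable that is a function of $x$. The model specifies $\Pr(A=a\mid x,\theta)=q_a(x)$, so $A\perp\theta\mid x$. For any value $b$ of $B$,
\[
\Pr(A=a,\theta\mid B=b)=\mathbb E\bigl[q_a(x)\,\Pr(\theta\mid x)\,\big|\,B=b\bigr].
\]
By step (i), $q_a(x)=g_a(b)$ is constant on the event $\{B=b\}$, so it factors out of the expectation. This gives $g_a(b)\Pr(\theta\mid B=b)$. Summing over $\theta$ shows $\Pr(A=a\mid B=b)=g_a(b)$, and hence $A\perp\theta\mid B$.

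Here $\Pr(\theta\mid x)$ is the true environment posterior $P^\star$, not $P_S$. This causes no difficulty, because the argument only uses that $q$ is measurable with respect to $B$.

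\textbf{Main obstacle: the converse.} Suppose some agent satisfying Definition~\ref{definition:perturbed-max} held $P_S=P_E$ while producing the observed joint distribution of $(x,a,\theta)$. The forward direction would then give $a\perp\theta\mid P_E$, which is a contradiction. The point needing care is to state precisely that ``holds $P_S=P_E$'' means the agent is observationally equivalent: its choice rule generates the same conditional law of $a$ given $x$ as the model, and this holds for every $u$ and $C$. The only genuinely delicate step is uniqueness of the argmax in step (i). If $q(x)$ were merely one element of a non-singleton argmax, the selection could depend on $x$ beyond $B$ and break the result. Strict convexity of $C$ is what rules this out, so I would make that point explicit.
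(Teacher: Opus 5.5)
Your proposal is correct and follows essentially the same route as the paper: show the choice vector is a function of $P_S(\cdot\mid x)$ alone, use the tower property over $x$ to get $A\perp\!\!\!\perp\theta\mid B$, and obtain the converse as the contrapositive with $P_S=P_E$. One refinement is yours: you invoke strict convexity of $C$ to get a unique maximizer, whereas the paper simply writes $q(b)$ as a function of $b$ and never justifies that the selection from the $\arg\max$ cannot depend on $x$ beyond $B$.
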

For intuition, note that in Definition \ref{definition:perturbed-max}, the expected utility term depends on $x$ only through $P_S(\theta|x)$. Once this belief term is fixed, sampling $a \sim q(x)$ conveys no additional information about $\theta$.



\begin{remark}
\label{rem:converse}
Observing $a \perp\!\!\!\perp \theta \mid P_E(\theta \mid x)$ does \emph{not} by itself suffice to prove truthful reporting by a rational agent. E.g.\, consider an agent who takes a uniformly random action independent of $x$ and reports a (independent) uniformly random value in [0,1] as $p_E(x)$. This satisfies $a \perp \theta \mid P_E(\theta|x)$ without satisfying Definition \ref{definition:perturbed-max}. 
\end{remark}
\begin{remark}
    While conditional independence can be derived from Definition \ref{definition:perturbed-max}, it can be shown to hold even more broadly. For example, we show in the appendix that it holds for agents described by prospect theoretic risk-aversion \citep{eec14168-5714-3ca8-b073-d038266f2734}. 
\end{remark}
\label{para:ci_testing}
In order to empirically test this condition, we first conduct a direct conditional-independence (CI) test of the null hypothesis
$A \perp\!\!\!\perp \theta \mid P_{E}(\theta\mid x)$ via the nonparametric CMI (Conditional Mutual Information) test for $H_0: I(A;\theta \mid p)=0$. While a binary reject/fail-to-reject CI decision is useful, it can be hard to interpret \emph{how large} a violation is in practice. Therefore, we additionally quantify the \emph{degree} of CI violation via an out-of-sample predictive-performance comparison. Concretely, we train two predictive models for the realized state $\theta$ and compare nested feature sets. \emph{Model 1} conditions on the elicited belief $p_E(x)$, while \emph{Model 2} conditions on $(A,p_E(x))$. Under the conditional-independence null, adding the action $A$ should not improve prediction of $\theta$ once we condition on $p_E$. We use cross-validated mean squared error (MSE) and report percent improvement as the relative reduction in MSE from including $A$. Significance is assessed by a bootstrap CI.

\label{subsec:cyclic_monotonicity}

\paragraph{Testing whether beliefs move actions.}
For beliefs to meaningfully describe behavior, they should be
\textit{action-guiding}: actions should shift systematically as beliefs change.
For an expected utility maximizer, expected utilities are linear in
probabilities, yielding action-switching thresholds. This deterministic
structure is too strong when choices are stochastic or perturbed, so we instead
require choice probabilities to move monotonically with beliefs. With multiple
actions, this cannot generally be imposed action-by-action, since actions
compete for probability mass. We therefore use the convex-analytic monotonicity
condition implied by perturbed utility maximization in
Definition~\ref{definition:perturbed-max}. Let $b\in\Delta(\Theta)$ be a belief
over states, $q(b):=(q_1(b),\dots,q_J(b))$ the conditional choice-probability
vector, with $q_a(b):=\Pr(a(x)=a\mid P_S(\cdot\mid x)=b)$, and
$v(b)\in\mathbb R^J$ the expected-utility index, with
$v_a(b):=\sum_{\theta\in\Theta} b_\theta u(a,\theta)$.


\begin{proposition}[Cyclic monotonicity]
\label{prop:cyclic_monotonicity}
For any agent satisfying Definition~\ref{definition:perturbed-max}, the conditional choice-probability map $q(b)$ is cyclically monotone with respect to $v(b)$. That is, for any finite collection $b_1,\dots,b_M$ with $b_{M+1}=b_1$, $\sum_{m=1}^{M} q(b_m)\cdot\big(v(b_m)-v(b_{m+1})\big)\ge 0$.
Consequently, if the observed conditional choice probabilities violate these inequalities, then the choices are not rationalizable by any agent in the model class.
\end{proposition}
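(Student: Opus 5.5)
The plan is the standard convex-analytic argument: show that $q(b)$ is a subgradient of a convex function of $v$, and then use the fact that subdifferentials of convex functions are cyclically monotone (Rockafellar).

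\textbf{Step 1: the surplus function.} Define
\[
W(v) := \max_{\lambda\in\Delta(\mathcal A)} \{\lambda\cdot v - C(\lambda)\}, \qquad v\in\mathbb R^J .
\]
This is the convex conjugate of $C$ restricted to the simplex. It is a pointwise supremum of affine functions of $v$, so it is convex. It is also finite, assuming $C$ is lower semicontinuous on the compact simplex; otherwise I would take a supremum and note that the argmax in Definition~\ref{definition:perturbed-max} is assumed to exist.

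By Definition~\ref{definition:perturbed-max} and the remark following Proposition~\ref{thm:ci_rum}, the objective depends on $x$ only through $v(P_S(\cdot\mid x))$. Strict convexity of $C$ makes the maximizer unique, so $q$ is a well-defined function of $b$, namely $q(b)=\lambda^\star(v(b))$, the maximizer at $v(b)$. Uniqueness is what lets us speak of the conditional choice probability $q_a(b)=\Pr(a=a\mid P_S=b)$ without averaging over different $x$ sharing the same belief.

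\textbf{Step 2: the subgradient inequality.} For any $v'$, optimality of $\lambda^\star(v)$ at $v$ and suboptimality of $\lambda^\star(v)$ at $v'$ give
\[
W(v') \ge \lambda^\star(v)\cdot v' - C(\lambda^\star(v)) = W(v) + \lambda^\star(v)\cdot(v'-v).
\]
Hence $q(b)\in\partial W(v(b))$. No differentiability is needed.

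\textbf{Step 3: summing around a cycle.} Apply Step 2 with $v=v(b_m)$ and $v'=v(b_{m+1})$:
\[
q(b_m)\cdot\big(v(b_m)-v(b_{m+1})\big) \ge W(v(b_m)) - W(v(b_{m+1})).
\]
Summing over $m=1,\dots,M$ with $b_{M+1}=b_1$, the right-hand side telescopes to zero, which yields the stated inequality.

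\textbf{Step 4: the contrapositive.} The "consequently" clause is just the contrapositive. Any agent in the class, under any $u$ and $C$, must produce choice probabilities satisfying these inequalities for its own $v$. If the observed conditional choice probabilities violate the inequalities for every admissible utility index, no agent in the class rationalizes them.

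The only genuine subtlety I anticipate is in that last clause. The $v(b)$ involves the unknown $u$, so "violate these inequalities" must be read as holding for all $u$. I would state it explicitly in that form.

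The non-degeneracy condition on $u$ is not needed for the inequality itself. It only matters for excluding trivial rationalizations. I would note this rather than use it.
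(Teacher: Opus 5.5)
Your proof is correct and is essentially the paper's argument. Substituting $W(v(b_{m+1}))=q(b_{m+1})\cdot v(b_{m+1})-C(q(b_{m+1}))$ into your subgradient inequality gives exactly the paper's optimality comparison of $q_{m+1}$ against $q_m$ at $v_{m+1}$, so telescoping $W$ instead of $C$ is only a change of bookkeeping, and your explicit ``for every admissible $u$'' reading of the contrapositive is a sensible sharpening of the paper's informal final sentence.
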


Cyclic monotonicity generalizes standard monotonicity to settings with multiple
actions and choice-probability vectors. It was previously known to hold for random utility models \citep{mcfadden1990stochastic} and is easily shown to hold for the more general perturbed maximization class via convex-analytic arguments \citep{rockafellar1997convex}. In general, the condition can be
somewhat abstract, but we show it has a simpler interpretation when
$\theta\in\{0,1\}$. Define $d_a:=u(a,1)-u(a,0)$ and let
$q_a(p):=\Pr(a(x)=a\mid p(x)=p)$, where $p(x)=P_E(\theta=1\mid x)$.

\begin{proposition}[Binary-state characterization of cyclic monotonicity]
\label{prop:single-mono}
Suppose $\theta\in\{0,1\}$. For the utility-difference vector $d$ induced
by $u$, the conditional choice rule $q(p)$ is cyclically monotone with respect
to the utility index $v(p)$ if and only if, for every $p<p'$,
\[
\sum_{a=1}^{J} q_a(p)d_a
\le
\sum_{a=1}^{J} q_a(p')d_a .
\]
Moreover, under Definition~\ref{definition:perturbed-max}, the inequality is
strict whenever $q(p)\neq q(p')$.
\end{proposition}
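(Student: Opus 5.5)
In the binary case the belief is the scalar $p\in[0,1]$, and the utility index is affine in it. Writing $c_a:=u(a,0)$, we have
\[
v_a(p)=(1-p)\,u(a,0)+p\,u(a,1)=c_a+p\,d_a .
\]
Differences therefore collapse to $v(p)-v(p')=(p-p')\,d$, because the intercept vector $c$ cancels. Setting $s(p):=\sum_a q_a(p)d_a=q(p)\cdot d$, the cyclic monotonicity sum in Proposition~\ref{prop:cyclic_monotonicity} becomes
\[
\sum_{m=1}^M (p_m-p_{m+1})\,s(p_m)\ \ge 0 .
\]
This is exactly cyclic monotonicity of the scalar map $p\mapsto s(p)$ on $\mathbb R$. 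The plan is to prove that scalar cyclic monotonicity on the real line is equivalent to $s$ being nondecreasing.

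\emph{Necessity.} Take the $2$-cycle $p,p'$. The sum is $(p-p')s(p)+(p'-p)s(p')=(p'-p)(s(p')-s(p))\ge 0$, so $p<p'$ implies $s(p)\le s(p')$.

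\emph{Sufficiency.} If $s$ is nondecreasing, define $F(p):=\int_0^p s(t)\,dt$. This $F$ is convex, and $s(p)$ is a subgradient of $F$ at $p$. The subgradient inequality gives
\[
F(p_{m+1})\ge F(p_m)+s(p_m)(p_{m+1}-p_m)
\]
for each $m$. Summing over the cycle, the $F$ terms telescope to zero, which yields exactly the required inequality. Alternatively, this is Rockafellar's theorem specialized to one dimension. If one prefers to avoid integration with a possibly discontinuous $s$, the subgradient inequality can be checked directly for a monotone step function, or one can argue by induction on the cycle.

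\emph{Strictness under Definition~\ref{definition:perturbed-max}.} The optimal $q(p)$ maximizes $\lambda\cdot v(p)-C(\lambda)$ over $\Delta(\mathcal A)$. Strict convexity of $C$ makes the maximizer unique. Let $q=q(p)$ and $q'=q(p')$ with $q\neq q'$. Uniqueness gives two strict optimality inequalities:
\[
q\cdot v(p)-C(q)>q'\cdot v(p)-C(q'),\qquad q'\cdot v(p')-C(q')>q\cdot v(p')-C(q).
\]
Adding them gives $(q-q')\cdot(v(p)-v(p'))>0$. Since $v(p)-v(p')=(p-p')\,d$, this becomes $(p-p')(s(p)-s(p'))>0$. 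With $p<p'$, this means $s(p)<s(p')$.

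The main obstacle is bookkeeping rather than depth. The conditional map $q(p)$ is defined through $P_S=P_E$, so we must assume the elicited belief equals the subjective one. Note also that the strict optimality inequalities rely on uniqueness of the maximizer, which is where strict convexity of $C$ enters; without it only the weak inequality survives. The sufficiency direction needs care if $s$ is only defined on the observed support of $p$. In that case I would work with the finite set of observed values, sort the cycle points, and use the discrete telescoping argument with $F$ defined piecewise-linearly.
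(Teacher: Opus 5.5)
Your proposal is correct and follows essentially the same route as the paper. Both arguments use $v(p')-v(p)=(p'-p)d$ to reduce to a scalar cycle condition on $s(p)=q(p)\cdot d$. Necessity comes from the two-point cycle, sufficiency from a convex potential with $s(p)$ as a subgradient, and strictness from adding the two optimality inequalities, which are strict because strictly convex $C$ makes the maximizer unique. For sufficiency, the paper uses exactly your fallback: a piecewise-linear potential on the sorted cycle points. This matters because $\int_0^p s$ needs $q$ defined on a whole interval; otherwise your integral version is fine, since monotonicity of $s$ directly gives $F(p')-F(p)\ge s(p)(p'-p)$ for either ordering of $p,p'$.
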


Thus, as the elicited probability of the positive state increases, the model's
choice probabilities should move toward actions that are relatively more valuable
in state 1. To test this implication, we divide elicited beliefs into $K$
quantile bins, with bin centers $\bar p_1 \le \cdots \le \bar p_K$. For each bin
$k$, we estimate the vector of action shares
$\widehat q_k = (\widehat q_{1k},\dots,\widehat q_{Jk})$. Define
$\widehat m_k(d):=\sum_{a=1}^J \widehat q_{ak}d_a$, where
$d_a:=u(a,1)-u(a,0)$ is the relative value of action $a$ in state $1$
rather than state $0$.
By Proposition~\ref{prop:single-mono}, the data are consistent with the strict
implication of the model if there exists a utility-difference vector
$d\in\mathbb R^J$ such that $\widehat m_k(d)$ is weakly increasing in $k$, and
is strictly increasing across any adjacent bins $k,k+1$ for which
$\widehat q_k\neq \widehat q_{k+1}$. Only the relative values of $d$ matter
because each $\widehat q_k$ is a probability vector. We therefore normalize $d$
by setting one coordinate equal to $0$ and another equal to $1$. Since we do not
know which actions have the smallest and largest state-1 relative utilities, we
try all ordered pairs of distinct actions $(a^\star,b^\star)$.
Let $\mathcal S:=\{k\in\{1,\ldots,K-1\}:\widehat q_k\neq \widehat q_{k+1}\}$
denote the adjacent bin pairs for which the estimated choice-share vectors
differ. For each ordered pair $(a^\star,b^\star)$, we solve the signed-margin
linear program
\begin{equation}
\label{eq:cm-signed-lp-family}
\begin{aligned}
\mathrm{SignedLP}(a^\star,b^\star):
\qquad
\max_{d,\gamma}
\quad & \gamma \\
\text{s.t.}
\quad
& \widehat m_{k+1}(d)-\widehat m_k(d)\ge \gamma,
&& k\in\mathcal S, \\
& d_{a^\star}=0,
\qquad
d_{b^\star}=1, \\
& d\in[0,1]^J, \\
& \gamma\in\mathbb R .
\end{aligned}
\end{equation}
We summarize the signed monotonicity statistic by
$\widehat\gamma:=\max_{a^\star\neq b^\star}
\mathrm{val}\bigl(\mathrm{SignedLP}(a^\star,b^\star)\bigr)$.
This statistic is the largest uniform lower bound on the adjacent-bin increases
of the fitted utility-difference index. Hence positive values indicate strict
monotonicity with a uniform margin, zero indicates weak but not strict
monotonicity, and negative values indicate a violation of weak monotonicity.

\begin{proposition}[Equivalence with Proposition \ref{prop:single-mono}]
\label{thm:lp-equivalence}
Assume $\mathcal S\neq\varnothing$. Then $\widehat\gamma>0$ if and only if
there exists a non-constant $d\in\mathbb R^J$ such that
$\widehat m_1(d)\le \cdots \le \widehat m_K(d)$, and, moreover,
$\widehat m_k(d)<\widehat m_{k+1}(d)$ for every $k\in\mathcal S$.
Moreover, $\widehat\gamma\ge 0$ if and only if there exists a non-constant
$d\in\mathbb R^J$ such that
$\widehat m_1(d)\le \cdots \le \widehat m_K(d)$. Finally,
$\widehat\gamma\in[-1,1]$.
\end{proposition}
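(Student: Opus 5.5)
The plan is to exploit two invariances of the map $d\mapsto \widehat m_k(d)$: it is linear in $d$, and since each $\widehat q_k$ is a probability vector, $\widehat m_k(\alpha d+\beta\mathbf 1)=\alpha\widehat m_k(d)+\beta$. Thus the increments $\Delta_k(d):=\widehat m_{k+1}(d)-\widehat m_k(d)$ are invariant to adding constants and scale positively. This lets us normalize any candidate $d$ to the LP's feasible set, and conversely read off an unnormalized $d$ from an LP solution.

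\textbf{Boundedness.} Feasibility of each $\mathrm{SignedLP}(a^\star,b^\star)$ is immediate: take $d=e_{b^\star}$ and $\gamma=\min_{k\in\mathcal S}\Delta_k(d)$. For $d\in[0,1]^J$, each $\widehat m_k(d)$ is a convex combination of entries of $d$, so it lies in $[0,1]$. Hence $|\Delta_k(d)|\le 1$, and since $\mathcal S\neq\varnothing$, the optimal $\gamma=\min_{k\in\mathcal S}\Delta_k(d)$ lies in $[-1,1]$. Each LP therefore has a finite value in $[-1,1]$, and so does the maximum $\widehat\gamma$.

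\textbf{The $\widehat\gamma>0$ direction.} Suppose $\widehat\gamma>0$. An optimizer $d$ is non-constant because $d_{a^\star}=0\neq 1=d_{b^\star}$, and it satisfies $\Delta_k(d)\ge\widehat\gamma>0$ for $k\in\mathcal S$.

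For $k\notin\mathcal S$ we have $\widehat q_k=\widehat q_{k+1}$, so $\Delta_k(d)=0$ for every $d$. This is the key observation: the weak inequalities off $\mathcal S$ hold automatically, so the full monotone chain follows.

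Conversely, given a non-constant $d$ with the stated properties, set $d'=(d-\min d)/(\max d-\min d)$. Then $d'\in[0,1]^J$, with $d'_{a^\star}=0$ at an argmin and $d'_{b^\star}=1$ at an argmax, and $a^\star\neq b^\star$ since $d$ is non-constant. Each increment satisfies $\Delta_k(d')=\Delta_k(d)/(\max d-\min d)$, which is positive on $\mathcal S$. So $\mathrm{SignedLP}(a^\star,b^\star)$ attains a positive value, and $\widehat\gamma>0$.

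\textbf{The $\widehat\gamma\ge0$ direction.} The argument is identical with strict inequalities replaced by weak ones. If $\widehat\gamma\ge 0$, the optimizer gives $\Delta_k\ge 0$ on $\mathcal S$ and $\Delta_k=0$ off $\mathcal S$. Conversely, a weakly monotone non-constant $d$ normalizes to a feasible point with $\gamma=\min_{k\in\mathcal S}\Delta_k(d')\ge 0$.

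\textbf{Main obstacle.} The only delicate points are the attainment of the LP optimum and the argmin/argmax normalization. Attainment holds because the feasible set is compact in $d$ and $\gamma$ is bounded above on it. The normalization must place the zero and one coordinates at distinct actions, which is exactly guaranteed by $d$ being non-constant.
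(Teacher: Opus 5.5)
Your proof is correct and follows essentially the same route as the paper. Both arguments affinely rescale a non-constant $d$ onto $[0,1]^J$, taking an argmin/argmax pair as $(a^\star,b^\star)$, use the fact that increments off $\mathcal S$ vanish identically, and bound $\widehat\gamma$ via $\widehat m_k(d)\in[0,1]$. You also make attainment of the LP optimum explicit, which the weak direction needs when the value is exactly $0$; the paper uses this only implicitly.
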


Proposition~\ref{thm:lp-equivalence} follows from the normalization used in the
LP and the finiteness of $\mathcal S$. If a non-constant vector $d'$ satisfies
the required weak and strict inequalities, then
$\min_{k\in\mathcal S}\{\widehat m_{k+1}(d')-\widehat m_k(d')\}>0$. The affine
rescaling $\widetilde d_a=(d'_a-\min_b d'_b)/(\max_b d'_b-\min_b d'_b)$
preserves the inequalities, lies in $[0,1]^J$, and attains both $0$ and $1$,
so it is feasible for the relevant signed-margin LP with some $\gamma>0$.
The converse is immediate from the LP constraints. The same argument with weak inequalities gives $\widehat\gamma\ge 0$ if and
only if some normalized non-constant $d$ makes the fitted index weakly
increasing. Adjacent pairs outside $\mathcal S$ impose no additional restriction,
because $\widehat q_k=\widehat q_{k+1}$ implies
$\widehat m_{k+1}(d)-\widehat m_k(d)=0$ for every $d$. Thus
$\widehat\gamma<0$ means that every normalized $d$ generates a negative
increment for at least one adjacent pair in $\mathcal S$. Finally, since
$d\in[0,1]^J$ and each $\widehat q_k$ is a probability vector,
$\widehat m_k(d)\in[0,1]$, so every adjacent increment lies in $[-1,1]$ and
therefore $\widehat\gamma\in[-1,1]$.


\textbf{Characterizing the strength of black-box belief elicitation.}
We now show that the conditional independence and monotonicity conditions constitute the full testable implications of an agent holding a belief, at least in the regime we study with black-box access and no assumptions on the agent's utility function. Formally, while we have already shown that both conditions above are necessary for Definition \ref{definition:perturbed-max} (perturbed utility maximization) to hold, we now show that they are jointly \textit{sufficient} as well: when both conditions are satisfied, the agent is \textit{observationally equivalent} to some perturbed utility maximizer who holds $P_E(\theta|x)$ as their true subjective probability.

\begin{proposition}[Joint characterization of rationalizable elicited beliefs]
\label{thm:joint_characterization}
Let $\mathbb P$ denote the observed distribution over
$(P_E(\theta\mid x),\theta,a)$, and let
$q(P_E):=\big[\Pr(a=a'\mid P_E(\theta\mid x)=P_E)\big]_{a'\in\mathcal A}$
be the induced conditional choice rule. There exists a utility function
$u:\mathcal A\times\Theta\to\mathbb R$ that is non-degenerate in relative action
utilities and an agent satisfying the perturbed-maximization representation in
Definition~\ref{definition:perturbed-max}, except with some convex, but not
necessarily strictly convex, regularizer
$C:\Delta(\mathcal A)\to\mathbb R$, whose subjective belief satisfies
$P_S(\theta\mid x)=P_E(\theta\mid x)$ and whose induced distribution over
$(P_E(\theta\mid x),\theta,a)$ equals $\mathbb P$, if and only if
$a\perp\!\!\!\perp \theta\mid P_E(\theta\mid x)$ and $q(P_E)$ is cyclically
monotone with respect to the utility index
$v_a(P_E)=\sum_{\theta\in\Theta}P_E(\theta\mid x)u(a,\theta)$.
\end{proposition}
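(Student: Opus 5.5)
The argument has two directions. Necessity is essentially already in hand, so most of the work goes into sufficiency, which I would handle through Rockafellar's characterization of cyclically monotone maps as subgradients of convex functions.

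\emph{Necessity.} Suppose an agent of the stated form exists with $P_S=P_E$. Its action is drawn from $q(x)$, and $q(x)$ depends on $x$ only through the vector $v(P_S(\cdot\mid x))$; if the argmax is not unique, we fix a selection that depends only on that vector. The argument of Proposition~\ref{thm:ci_rum} then gives $a\perp\!\!\!\perp\theta\mid P_E$. That argument uses only the fact that $q$ is a function of the belief, not strict convexity of $C$.

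For cyclic monotonicity, first-order optimality says that each $q(b)$ maximizes $\lambda\cdot v(b)-C(\lambda)$ over the simplex. Hence $q(b)\in\partial C^*(v(b))$, where $C^*$ is the convex conjugate of $C+\iota_{\Delta}$. Subgradient maps of convex functions are cyclically monotone (Rockafellar), and this again needs no strictness. So the inequality of Proposition~\ref{prop:cyclic_monotonicity} holds along $v(b_m)$.

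\emph{Sufficiency.} Assume conditional independence and cyclic monotonicity with respect to some non-degenerate $u$. Consider the finite or countable collection of pairs $(v(b),q(b))$ for $b$ in the support of $P_E$. I would apply Rockafellar's theorem, which says that any cyclically monotone relation $\{(v_i,q_i)\}$ is contained in the subdifferential of a closed convex function. Concretely, fix a base point $v_0$ and set
\[
G(w)=\sup\Big\{\sum_{m=0}^{M-1} q_m\cdot(v_{m+1}-v_m)+q_M\cdot(w-v_M)\Big\},
\]
with the supremum over finite chains starting at $v_0$. Cyclic monotonicity makes $G$ finite at each $v_i$ and gives $q_i\in\partial G(v_i)$. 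Since every $q_i\in\Delta(\mathcal A)$, $G$ is a supremum of affine functions whose slopes lie in $\Delta$. Therefore $G$ is finite everywhere and Lipschitz, and its conjugate $G^*$ is finite only on $\Delta(\mathcal A)$.

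Now define $C:=G^*$ restricted to $\Delta$, a convex function. Fenchel duality gives $q_i\in\partial G(v_i)\iff q_i\in\arg\max_{\lambda}\{\lambda\cdot v_i-G^*(\lambda)\}$, which is exactly the perturbed maximization with regularizer $C$. Two technical points need care. First, $C$ may take the value $+\infty$ at some points of $\Delta$; I would replace it by its restriction to the closed convex hull of the observed $q_i$ and extend by a finite convex majorant, or argue that $G^*$ is finite on the relevant set because $G$ is polyhedral-like. Second, ties in the argmax are resolved by defining the agent to select $q(b)$ itself, which is allowed because $C$ need not be strictly convex.

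With this $u$ and $C$, let the agent hold $P_S(\theta\mid x)=P_E(\theta\mid x)$ and draw $a\sim q(P_E(\cdot\mid x))$ independently of $\theta$ given $x$. The agent's joint law of $(P_E,\theta,a)$ factors as $\mathbb P(P_E,\theta)\,q(a\mid P_E)$. This equals $\mathbb P$ precisely because the observed distribution satisfies $a\perp\!\!\!\perp\theta\mid P_E$. The marginal of $(P_E,\theta)$ matches because the agent's beliefs are taken from the same environment. Non-degeneracy is inherited from the $u$ in the hypothesis.

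The main obstacle I expect is the regularity of $C$, namely ensuring that $G^*$ is a genuine real-valued convex function on $\Delta(\mathcal A)$ rather than an extended-valued one, and that the argmax selection matches $q$ exactly. If finiteness is a problem, I would first try working with finitely many support points, where $G$ is a finite max of affine functions and $G^*$ is finite on the convex hull of the $q_i$. I would then extend $G^*$ to all of $\Delta$ by adding a large multiple of the distance to that hull, which preserves convexity and the optimality of each $q_i$.
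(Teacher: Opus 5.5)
Your overall route matches the paper's. Necessity comes from rerunning Propositions~\ref{thm:ci_rum} and~\ref{prop:cyclic_monotonicity} without strict convexity. Sufficiency comes from a Rockafellar potential $G$, its conjugate used as regularizer, and an agent drawing $a\sim q(P_E)$ whose joint law matches $\mathbb P$ by conditional independence. You are also right that the hypothesized $u$ must be non-degenerate; the paper's proof only says ``nonzero''. Two steps need repair.

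\emph{Necessity: you cannot choose the selection.} You may not ``fix a selection'' there, because the agent is given. Once $C$ is merely convex, Definition~\ref{definition:perturbed-max} lets $q(x)$ be any point of the argmax, so it can differ across contexts that share a belief. For example, take binary $a,\theta$, $u(a,\theta)=1[a=\theta]$ and $C\equiv 0$. Use two contexts with $P_S(\theta=1\mid x)=1/2$ but true posteriors $0.9$ and $0.1$, plus a third context with a different belief so that $u$ is non-degenerate. Choosing $a=1$ at the first context and $a=0$ at the second is optimal, yet $a\not\!\perp\!\!\!\perp\theta\mid P_S$.

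So the ``only if'' direction needs, as part of the model, that the choice vector is a function of the belief. The paper makes the same tacit assumption when it says Proposition~\ref{thm:ci_rum} only uses that choice probabilities depend on $x$ through the belief. Cyclic monotonicity is unaffected, because averaging tied maximizers stays inside the convex argmax.

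\emph{Sufficiency: finiteness of $C$.} Your concern is correct, and the paper passes over it. $G^*$ (the paper's $W^*$) is $+\infty$ off the closed convex hull of the observed $q(b)$. For instance, if some action is never chosen, $G^*=+\infty$ on most of $\Delta(\mathcal A)$, so $G^*|_{\Delta}$ is not real-valued. Your finite-support patch works only if you first replace $G^*$ off the hull by a finite convex extension before adding the distance penalty. The general ``finite convex majorant'' step is unjustified as stated.

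The clean fix uses bounded slopes on the other side. Set
\[
C(\lambda):=\sup_{b}\{\lambda\cdot v(b)-G(v(b))\}
\]
over $b$ in the support.
\begin{itemize}
\item Since $\|v(b)\|_\infty\le\max_{a,\theta}|u(a,\theta)|$ and $G$ is finite and Lipschitz, $C$ is finite, convex and Lipschitz on all of $\mathbb R^{|\mathcal A|}$.
\item From $q(b)\in\partial G(v(b))$ you get $q(b)\cdot v(b')-G(v(b'))\le q(b)\cdot v(b)-G(v(b))$ for all $b'$. Hence $C(q(b))=q(b)\cdot v(b)-G(v(b))$.
\item By definition of $C$, $\lambda\cdot v(b)-C(\lambda)\le G(v(b))=q(b)\cdot v(b)-C(q(b))$ for every $\lambda$.
\end{itemize}
So $q(b)$ is a maximizer, with no finite-support assumption, hull restriction or distance penalty. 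Equivalently, you can apply Rockafellar's construction directly to the reversed relation $\{(q(b),v(b))\}$. That relation is also cyclically monotone, since reversing a cycle exchanges the two forms of the inequality.
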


Our theoretical results show a perhaps surprising fact: elicited beliefs have
empirically testable implications even after discarding much of the structure
underlying traditional belief elicitation for humans. We do not assume a known
utility function or require exact optimization. By contrast, the economic and
algorithmic literature on proper scoring rules and related elicitation
mechanisms typically requires substantive assumptions about incentives
\citep{savage1971elicitation,gneiting2007strictly,waggoner2014output}. Our tests
instead leverage a distinctive feature of LLMs: they can be repeatedly queried in
fresh context windows, allowing direct estimation of choice probabilities.
At the same time, our results characterize the limits of this black-box access
model. Passing our conditions only shows that, on the evaluated distribution, the
agent is observationally indistinguishable from some rational decision maker who
truly holds the reported belief. This is closer to failing to falsify the belief
report than to proving that the model ``truly'' holds that belief. For example,
the tests would not rule out deliberate misreporting or deception by a
sufficiently sophisticated agent.

\textbf{Consistency across decision tasks.}\label{subsec:prompt_consistency}
So far, we have discussed testable implications within a single decision task. Across tasks, a rational agent should separate beliefs from preferences: when multiple tasks reference the same state $\theta$, agents satisfying Definition~\ref{definition:perturbed-max} should act as if driven by a common subjective belief $p_S$, even if utilities or action sets differ. This cross-task stability---often called \textit{prize independence}---is a foundational implication of subjective probability \citep{anscombe,savage1972foundations,ronayne2022decision}. Accordingly, we compare elicited probabilities across prompts that share the same $(x,\theta)$ but differ in task framing. In the binary setting, the alternative prompt asks for $p_E(\theta\mid x)$ under an explicitly stated mean-squared-error (MSE) evaluation loss (chosen as MSE is a proper scoring rule that incentivizes telling the true belief). Let $p(x;\pi,j)$ denote the reported probability for context $x$ under prompt $\pi$ on repetition $j$, with $\pi_0$ the main diagnostic prompt and $\pi_{\mathrm{MSE}}$ the MSE prompt. We compare paired reports within the same repetition and summarize cross-task inconsistency as $\Delta_{\mathrm{MSE}}=\sqrt{\frac{1}{nr}\sum_{i=1}^{n}\sum_{j=1}^{r}\left(p(x_i;\pi_{\mathrm{MSE}},j)-p(x_i;\pi_0,j)\right)^2}.$ Larger values indicate less stable elicited beliefs across task framings. As a reference, we also report within-prompt variability for the standard prompt $\pi_0$, computed from repeated elicitations of $p(x;\pi_0,j)$.

\begin{table*}[t]
\centering
\caption{\textbf{Conditional-independence (belief sufficiency) tests.}
The first section reports kNN conditional mutual information $I(A;\theta \mid p)$ with 95\% confidence intervals. The second reports Random Forest comparisons of $\theta\sim p$ vs.\ $\theta\sim(A,p)$, summarized as percent MSE improvement with 95\% bootstrap CIs. The third reports average RF improvement by model and by dataset/domain.}
\label{tab:ci_combined_cmi_rf}

{
\footnotesize
\setlength{\tabcolsep}{3.5pt}
\renewcommand{\arraystretch}{0.92}
\setlength{\aboverulesep}{1.5pt}
\setlength{\belowrulesep}{1.5pt}
\setlength{\cmidrulekern}{0.75pt}

\begin{tabular}{l r l @{\hspace{1pt}\vrule\hspace{1pt}} r l @{\hspace{1pt}\vrule\hspace{1pt}} l r l}
\toprule
& \multicolumn{2}{c}{\textbf{kNN CMI}}
& \multicolumn{2}{c}{\textbf{RF improvement}}
& \multicolumn{3}{c}{\textbf{Average RF improvement}} \\
\cmidrule(lr){2-3} \cmidrule(lr){4-5} \cmidrule(lr){6-8}
\textbf{Dataset / Model}
& \textbf{CMI}
& \textbf{95\% CI}
& \textbf{\% Imp.}
& \textbf{95\% CI}
& \textbf{Group}
& \textbf{\% Imp.}
& \textbf{95\% CI} \\
\midrule
Heart--GPT-Min     & 0.1454 & [0.1119, 0.1789] & 18.39 & [ 0.99, 31.66] & GPT-Minimal   &  6.15 & [-1.22, 13.52] \\
Heart--GPT-High    & 0.0753 & [0.0422, 0.1085] &  9.93 & [ 2.91, 17.75] & GPT-High      &  4.89 & [ 1.25,  8.53] \\
Heart--Llama       & 0.0718 & [0.0365, 0.1070] & 14.60 & [ 6.15, 22.64] & Llama         & 15.19 & [ 6.11, 24.27] \\
Heart--DeepSeek    & 0.0675 & [0.0354, 0.0997] & 22.41 & [ 9.64, 32.98] & DeepSeek-R1   &  5.11 & [-4.83, 15.06] \\
\midrule
Cry--GPT-Min       & 0.2232 & [0.1874, 0.2589] &  6.27 & [-0.40, 11.43] & Heart Disease & 16.33 & [11.81, 20.85] \\
Cry--GPT-High      & 0.1901 & [0.1390, 0.2412] &  6.92 & [ 2.22, 10.67] & Cry           &  5.21 & [-0.03, 10.45] \\
Cry--Llama         & 0.4223 & [0.3764, 0.4681] & 11.12 & [ 2.39, 20.25] & Fever         &  2.07 & [ 0.30,  3.85] \\
Cry--DeepSeek      & 0.1745 & [0.1265, 0.2225] & -3.48 & [-6.17, -0.62] & Diabetes      &  7.73 & [-4.93, 20.38] \\
\midrule
Fever--GPT-Min     & 0.1446 & [0.1128, 0.1765] & -0.06 & [-0.24,  0.14] &               &       &                 \\
Fever--GPT-High    & 0.0944 & [0.0564, 0.1323] &  1.85 & [ 0.32,  3.27] &               &       &                 \\
Fever--Llama       & 0.3289 & [0.2893, 0.3686] &  4.95 & [-1.31, 10.65] &               &       &                 \\
Fever--DeepSeek    & 0.2060 & [0.1663, 0.2456] &  1.54 & [-3.65,  6.52] &               &       &                 \\
\midrule
Diabetes--GPT-Min  & 0.0193 & [0.0129, 0.0258] &  0.00 & [-0.00,  0.00] &               &       &                 \\
Diabetes--GPT-High & 0.0461 & [0.0182, 0.0740] &  0.85 & [-0.26,  1.91] &               &       &                 \\
Diabetes--Llama    & 0.2695 & [0.2357, 0.3033] & 30.07 & [20.56, 38.96] &               &       &                 \\
Diabetes--DeepSeek & 0.0351 & [0.0169, 0.0533] & -0.03 & [-0.10,  0.01] &               &       &                 \\
\bottomrule
\end{tabular}
}
\end{table*}

\textbf{Internal Consistency via Law of Iterated Expectation.}
\label{subsec:consistency}
So far, we have focused purely on the link between beliefs and actions. However, a parallel literature investigates whether elicited beliefs from LLMs constitute a coherent probability distribution, in the sense of adhering to probability axioms \citep{zhu2025incoherentprobabilityjudgmentslarge,herrmann2024standards}. To investigate the relationship between these two senses of coherence (decision-theoretic and probabilistic), we develop a test generalizing previous work to the settings we consider. In particular, we test adherence to the law of iterated expectation over an auxiliary variable $z$ (e.g., a patient attribute) that we can optionally reveal. Concretely, let $x$ denote the base context (with $z$ withheld), and let $\{B_1,\ldots,B_k\}$ be a partition of the support of $z$. We verify:
$P_E(\theta\mid x)
=
\sum_{j=1}^k
P_E(\theta\mid x, z \in B_j)\; P_E(z \in B_j \mid x).$ In our experiments, we construct partitions based on patient covariates and elicit three quantities (with m standing for model): $P_E(\theta\mid x)$, $P_E(\theta\mid x, z \in B_j)$ for each bin $B_j$, and $P_E(z \in B_j\mid x)$. We then measure the discrepancy: $\Delta_{\text{LIE}}(x)
=\left|
P_E(\theta\mid x)
-
\sum_{j=1}^k
P_E(\theta\mid x, z \in B_j)\; P_E(z \in B_j \mid x)
\right|.$ Large values of $\Delta_{\text{LIE}}(x)$ indicate internal inconsistencies in the model's reported beliefs, suggesting they may not reflect a coherent probabilistic reasoning process. We do not expect this sense of coherence to necessarily align with our account of valid belief elicitation. Intuitively, a decision maker can ``truthfully'' hold beliefs that are not fully consistent in a probabilistic sense (e.g., they could satisfy Definition \ref{definition:perturbed-max} without $P_S$ being a valid probability distribution). We measure  $\Delta_{\text{LIE}}$ to provide an empirical comparison of where these senses of coherence overlap and differ.


\section{Experimental Setup} 
\label{subsec:implementation}
We apply our methodology to four medical diagnosis tasks: two using real-world datasets (structural heart disease and diabetes) and two using expert-constructed Bayesian networks (pediatric medicine). We evaluate each dataset on the following language models: GPT-5 Thinking High Reasoning and Minimal Reasoning Models \citep{singh2025openaigpt5card}, Deepseek R1 671B \citep{Guo_2025}, and Llama-4 Scout 17B \citep{meta2024Llama4}. These models allow us to examine a range of both reasoning levels and model sizes, encompassing both SOTA frontier and open-source models. For each metric, we report 95$\%$ bootstrapped CIs. Additional details regarding prompts, datasets and other implementation details can be found in the Appendix and released code. For each dataset, we sample 200 covariate-outcome pairings with five repetitions.

\textbf{Datasets.} We use publicly available datasets: (1) electrocardiograms with demographic data and structural heart disease labels \citep{Elias_Finer_2025}, and (2) diabetic patient records containing fields such as exercise and glucose levels \citep{cdc_2017}. Ground-truth $p^*(x)$ are computed as the fraction of positive diagnoses falling into covariate strata (always at least 100 patients). We also evaluate on distributions from two Bayes nets constructed by a leading expert in pediatric medicine.  Pending completion of a data-sharing agreement, we plan to have these permissions by the camera-ready. We use networks in pediatric medicine constructed separately for chief complaints of fever and crying, respectively. The fever network contains demographic and symptom covariates (e.g., jaundice). The crying network covers causes such as colic and gas pain with behavioral/physical covariates.

\textbf{Decision task.} For the medical diagnosis tasks that we consider, the state space is binary ($\theta \in \{0,1\}$ indicating
absence or presence of a condition). We prompt the language model with each context $x_i$ and observe its chosen action $a_i \in \mathcal{A}$. For our diagnostic tasks, the action space consists of three options: diagnose the patient as having the condition, diagnose the patient as not having the condition, or `defer", refusing to make any diagnosis. The model is instructed to first choose if it feels capable of making a diagnostic decision, and then asked which decision the model would make if it had to make a decision with only binary $a = \{1,0\}$ options presented. These questions can then translate our decision into one of the three options. Deferral is treated as a distinct action with its own cost. 

\textbf{Prompting.} When eliciting either beliefs or decisions, we provide the model with the clinical evidence $x$ and specify the relevant outcome $\theta$, then instruct it to return a numerical probability estimate. For example, we might ask for the probability of structural heart disease given that a patient is over 50 and has a particular ECG abnormality, and separately ask the model what diagnostic action it would take from the same evidence. Throughout, we elicit beliefs as $p(x):=P_E(\theta=1\mid x)$. Unless otherwise noted, we use a standard version of these prompts without additional instructions.

\section{Results}
\paragraph{Conditional Independence Test.}
\label{subsec:results_ci}

We first test whether the elicited belief is \emph{decision-sufficient} for the
model's actions, as implied by Proposition~\ref{thm:ci_rum}. Under truthful
reporting with outcome-dependent loss and exogenous decision noise, the model's
action $A$ should add no information about the realized outcome $\theta$ once we
condition on the elicited belief, equivalently $H_0:I(A;\theta\mid p)=0$. The
left side of Table~\ref{tab:ci_combined_cmi_rf} reports a kNN ($k=3$)
conditional mutual information (CMI) test; all model--dataset pairs reject the
null, with 95\% bootstrap CIs (500 resamples) strictly above zero. Appendix
Table~\ref{tab:cmi_raw_iso} shows the same conclusion for isotonic-calibrated
beliefs, indicating that calibration does not restore decision-sufficiency.
To gauge the size of this residual dependence, the right side of
Table~\ref{tab:ci_combined_cmi_rf} reports a cross-validated Random Forest
comparison, with parameter details in the Appendix to prevent data contamination
and control overfitting. We compare a baseline model predicting the realized
state from the elicited belief alone, $\theta\sim p$, against an augmented model
that also includes the action, $\theta\sim(A,p)$. Under the
conditional-independence null, adding $A$ should not reduce out-of-sample MSE
once $p$ is included. Empirically, MSE reductions are heterogeneous by both
domain and model, with the clearest reductions for the Heart Disease domain and
for Llama across domains. These results suggest that $p$ captures most---but not
all---decision-relevant signal in many settings; in cases such as Llama and Heart
Disease, elicited probabilities do not appear to serve as sufficient statistics
for decisions, motivating task- and model-specific validation.


\begin{table}[t]
\centering
\caption{
Signed-margin monotonicity statistic $\widehat{\gamma}$ by dataset and model.
Positive values indicate strict monotonicity with a uniform margin, values equal
to zero indicate weak but not strict monotonicity, and negative values indicate
violations of weak monotonicity. Brackets report 95\% confidence intervals.
The right panel reports unweighted averages by model and by domain.
}
\label{tab:signed-monotonicity}
\begin{minipage}[t]{0.31\linewidth}
\centering
\footnotesize
{
\setlength{\tabcolsep}{2pt}
\renewcommand{\arraystretch}{0.9}
\begin{tabular}{l r}
\toprule
\textbf{Dataset / Model} & \textbf{$\widehat{\gamma}$ [95\% CI]} \\
\midrule
Heart--GPT-Min     & 0.104 [0.012, 0.168] \\
Heart--GPT-High    & 0.059 [0.000, 0.160] \\
Heart--Llama       & 0.042 [0.005, 0.068] \\
Heart--DeepSeek    & 0.192 [0.117, 0.205] \\
Cry--GPT-Min       & 0.050 [0.001, 0.117] \\
Cry--GPT-High      & 0.059 [0.003, 0.120] \\
Cry--Llama         & 0.266 [0.217, 0.317] \\
Cry--DeepSeek      & 0.097 [0.034, 0.142] \\
\bottomrule
\end{tabular}}
\end{minipage}
\hfill
\begin{minipage}[t]{0.31\linewidth}
\centering
\footnotesize
{
\setlength{\tabcolsep}{2pt}
\renewcommand{\arraystretch}{0.9}
\begin{tabular}{l r}
\toprule
\textbf{Dataset / Model} & \textbf{$\widehat{\gamma}$ [95\% CI]} \\
\midrule
Fever--GPT-Min     & 0.053 [0.000, 0.082] \\
Fever--GPT-High    & 0.103 [0.001, 0.194] \\
Fever--Llama       & -0.011 [-0.018, -0.005] \\
Fever--DeepSeek    & 0.148 [0.078, 0.175] \\
Diabetes--GPT-Min  & 0.000 [0.000, 0.032] \\
Diabetes--GPT-High & 0.031 [0.000, 0.049] \\
Diabetes--Llama    & 0.000 [0.000, 0.003] \\
Diabetes--DeepSeek & 0.010 [0.000, 0.028] \\
\bottomrule
\end{tabular}}
\end{minipage}
\hfill
\begin{minipage}[t]{0.31\linewidth}
\centering
\footnotesize
{
\setlength{\tabcolsep}{4pt}
\renewcommand{\arraystretch}{0.9}
\begin{tabular}{l r}
\toprule
\textbf{Average} & \textbf{$\widehat{\gamma}$} \\
\midrule
GPT-Min          & 0.052 \\
GPT-High         & 0.063 \\
Llama            & 0.074 \\
DeepSeek         & 0.112 \\
Heart            & 0.099 \\
Cry              & 0.118 \\
Fever            & 0.073 \\
Diabetes         & 0.010 \\
\bottomrule
\end{tabular}}
\end{minipage}
\end{table}
\textbf{Cyclic Monotonicity Test.}
Table~\ref{tab:signed-monotonicity} reports the signed cyclic-monotonicity
margin $\widehat{\gamma}$ for each dataset/model pair, computed by binning
elicited probabilities into five quantiles and estimating choice shares within
bins. Under this formulation, $\widehat{\gamma}$ is the largest uniform lower
bound on adjacent-bin increases in the fitted utility-difference index: positive
values indicate strict monotonicity, zero indicates weak but not strict
monotonicity, and negative values indicate a violation of weak monotonicity. Results vary across domains and models. In 9 of 16 dataset/model combinations,
the 95\% confidence interval lies strictly above zero, indicating strict cyclic
monotonicity. Six combinations have non-negative intervals that include zero,
consistent with weak monotonicity but not a positive strict margin. One
combination, Fever--Llama, has an interval strictly below zero, indicating a
weak-monotonicity violation. Domain effects are pronounced: Diabetes performs
weakest, with no model showing a strictly positive interval and an average
signed margin near zero. Across models, DeepSeek has the largest average signed
margin. 



\textbf{Prompt Consistency.}
In Table~\ref{tab:prompt_consistency_rmse} (Appendix), we report the standard deviation of elicited beliefs under the standard prompt and the RMSE deviation between elicited beliefs under the standard prompt and an alternative MSE-framed prompt. We find that for GPT-Min, GPT-High, and DeepSeek, the MSE-framed prompt produces deviations that are comparable to within-prompt variability under repeated standard prompting. For Llama, however, the MSE framing induces substantially larger differences in elicited probabilities. Overall, GPT-High displays the lowest cross-prompt variability, while Llama exhibits the largest sensitivity to the alternative task framing.


\textbf{Internal Consistency Test.} As in Section~\ref{subsec:consistency}, we
elicit (i) the model's marginal belief $p(x)$ that a patient has the condition
and (ii) the beliefs needed for the LIE decomposition over an auxiliary
``next-state'' variable $z$. We measure probabilistic coherence using the Law of
Iterated Expectation error $\Delta_{\text{LIE}}(x)$, summarized in
Figure~\ref{fig:consistency} by the median normalized ratio
$\Delta_{\text{LIE}}(x)/p(x)$ for robustness when $p(x)$ is near zero. As a
baseline, we compute the analogous ratio using cross-validated random forest
predictors trained on ground-truth $p^*(x)$ and $p^*(x,z)$. Across datasets,
LLMs typically show substantially larger inconsistency than this baseline, with
no model consistently dominating. This suggests that probabilistic coherence is
distinct from decision-consistency: models that perform well on the earlier
decision-consistency tests do not necessarily perform well here.

\begin{figure}
    \centering
    \includegraphics[scale=.15]{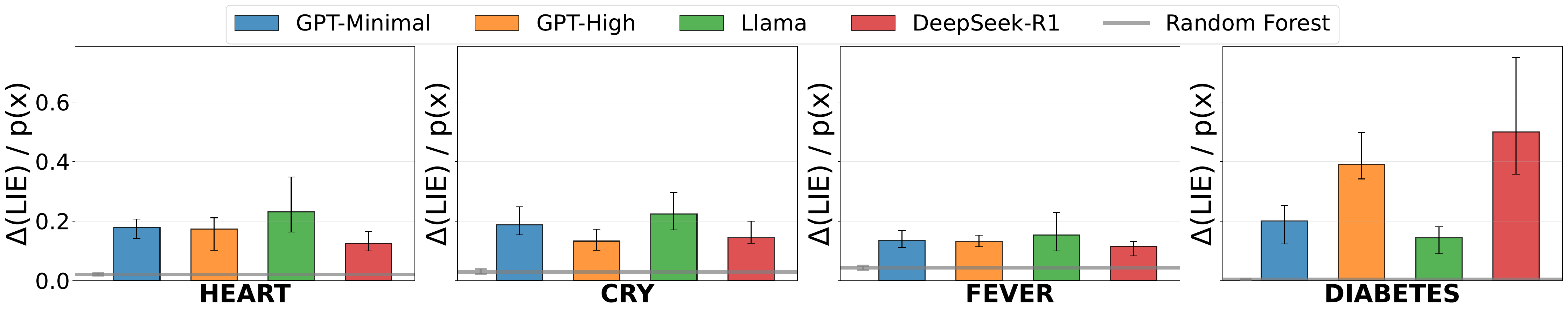}
    \caption{Internal Consistency Error via Law of Iterated Expectation (LIE): Here we plot the 95$\%$ CI for the median quantity  $\Delta_{\text{LIE}}(x)$/$p(x)$  (see Section~\ref{subsec:consistency}) by dataset(subplots) and model(bars). A cross-validated random forest model that is separately trained on ground truth $p^*(x)$ and $p^*(x,z)$ (also conditional on next state distribution z) values is used as a baseline.}
    \label{fig:consistency}
\end{figure}

\section{Discussion and Conclusions}

We develop a black-box decision-theoretic framework for testing whether elicited
LLM probabilities act as subjective beliefs guiding decisions. Because the
framework allows unrestricted utilities and stochastic choice, test failures
indicate not merely deviations from expected utility, but failures of elicited
probabilities to summarize decision-relevant information. Theoretically,
Proposition~\ref{thm:joint_characterization} shows that conditional independence
and monotonicity are necessary and sufficient for the observed distribution to be
rationalized by a perturbed utility maximizer using the elicited belief as its
subjective belief. Empirically, across medical diagnosis tasks and LLMs, elicited
beliefs often fail the sufficiency test, although choices frequently remain
monotone in elicited risk. For the strongest models, however, these failures are
small: elicited probabilities are imperfect but behaviorally informative
summaries of decision-relevant information. Thus, elicited probabilities can help
interpret model decisions, but should be empirically validated rather than
assumed to faithfully represent model beliefs.

\textbf{Acknowledgments:} Research was sponsored by the Army Research Office and was accomplished under Grant Number W911NF-25-1-0281. The views and conclusions contained in this document are those of the authors and should not be interpreted as representing the official policies, either expressed or implied, of the Army Research Office or the U.S. Government. The U.S. Government is authorized to reproduce and distribute reprints for Government purposes notwithstanding any copyright notation herein.



\clearpage
\bibliography{iclr2026_conference}
\bibliographystyle{iclr2026_conference}
\newpage
\appendix
\section*{Appendix}
\section{Proofs}

\subsection{Proof of Proposition~\ref{thm:ci_rum}}

\begin{proof}
Let $B:=P_S(\theta\mid x)$ denote the agent's subjective posterior belief,
viewed as a random element of $\Delta(\Theta)$. For each belief value $b$, define
the expected-utility index
\[
v_a(b):=\sum_{\theta\in\Theta} b_\theta u(a,\theta).
\]
By Definition~\ref{definition:perturbed-max}, conditional on $B=b$, the agent's
choice-probability vector is
\[
q(b)\in
\argmax_{\lambda\in\Delta(\mathcal A)}
\left\{
\sum_{a\in\mathcal A}\lambda_a v_a(b)-C(\lambda)
\right\}.
\]
Thus the conditional distribution of the action depends on the context $x$ only
through the subjective belief $B$.

Fix any action $a\in\mathcal A$. Since
$\Pr(A=a\mid x,\theta)=q_a(B)$, we have
\[
\Pr(A=a\mid B,\theta)
=
\mathbb E[\Pr(A=a\mid x,\theta)\mid B,\theta]
=
\mathbb E[q_a(B)\mid B,\theta]
=
q_a(B).
\]
Similarly, $\Pr(A=a\mid B)=q_a(B)$. Therefore
\[
\Pr(A=a\mid B,\theta)=\Pr(A=a\mid B)
\]
for every action $a$, which implies
\[
A\perp\!\!\!\perp \theta \mid B.
\]
Equivalently,
\[
a\perp\!\!\!\perp \theta\mid P_S(\theta\mid x).
\]

For the converse statement, suppose the elicited belief is the agent's subjective
belief, so that $P_E(\theta\mid x)=P_S(\theta\mid x)$ almost surely. Then the
argument above implies
\[
a\perp\!\!\!\perp \theta\mid P_E(\theta\mid x).
\]
Hence, if the observed joint distribution satisfies
\[
a\not\!\perp\!\!\!\perp \theta\mid P_E(\theta\mid x),
\]
then no agent satisfying Definition~\ref{definition:perturbed-max}, under any
utility function $u$ and regularizer $C$, can have $P_S=P_E$ while generating
the observed actions.
\end{proof}

\subsubsection{Conditional independence under prospect-theoretic risk aversion}

\begin{proof}
The conditional-independence conclusion does not rely on expected-utility
linearity. It only requires that, conditional on the agent's subjective belief,
the distribution over actions is fixed.

Let $B:=P_S(\theta\mid x)$ denote the agent's subjective posterior belief. Suppose
the agent evaluates each action using a prospect-theoretic index
$V_a(B)$, where $V_a(B)$ may incorporate a value function, reference dependence,
loss aversion, and probability weighting, but depends on the context $x$ only
through $B$. For example, one may write
$V_a(B)=\sum_{\theta\in\Theta}\omega_\theta(B)\phi(u(a,\theta)-r_a)$,
where $\phi$ is a prospect-theoretic value function, $\omega(B)$ is a
deterministic probability-weighting transformation of the belief $B$, and
$r_a$ is a reference point. More generally, the argument only requires that
$V_a(B)$ is measurable with respect to $B$.

The prospect-theoretic perturbed choice rule is then
\[
    q(B)
    \in
    \argmax_{\lambda\in\Delta(\mathcal A)}
    \left\{
    \sum_{a\in\mathcal A}\lambda_a V_a(B)-C(\lambda)
    \right\}.
\]
Thus, conditional on $B$, the choice-probability vector is a deterministic
function of $B$. In particular, for every action $a\in\mathcal A$,
$\Pr(A=a\mid x,\theta)=q_a(B)$. Therefore,
\[
\Pr(A=a\mid B,\theta)
=
\mathbb E[\Pr(A=a\mid x,\theta)\mid B,\theta]
=
\mathbb E[q_a(B)\mid B,\theta]
=
q_a(B).
\]
Similarly, $\Pr(A=a\mid B)=q_a(B)$. Hence
$\Pr(A=a\mid B,\theta)=\Pr(A=a\mid B)$ for every action $a$, which implies
$A\perp\!\!\!\perp \theta\mid B$.

Consequently, if the elicited belief equals the subjective belief, so that
$P_E(\theta\mid x)=P_S(\theta\mid x)$ almost surely, then
$A\perp\!\!\!\perp \theta\mid P_E(\theta\mid x)$ also holds. Thus a violation of
this conditional-independence restriction cannot be explained merely by
prospect-theoretic risk aversion, provided the prospect-theoretic evaluation
depends on the context only through the agent's subjective belief.
\end{proof}
\subsection{Proof of Proposition~\ref{prop:cyclic_monotonicity}}

\begin{proof}
Let $b_1,\dots,b_M$ be any finite collection of beliefs, and set
$b_{M+1}=b_1$. Write
\[
v_m:=v(b_m),
\qquad
q_m:=q(b_m).
\]
By Definition~\ref{definition:perturbed-max}, for each $m$,
\[
q_m\in
\argmax_{\lambda\in\Delta(\mathcal A)}
\left\{
\lambda\cdot v_m-C(\lambda)
\right\}.
\]
Therefore, comparing $q_m$ to $q_{m-1}$ at the utility index $v_m$, with indices
interpreted cyclically, we have
\[
q_m\cdot v_m-C(q_m)
\ge
q_{m-1}\cdot v_m-C(q_{m-1}).
\]
Rearranging,
\[
v_m\cdot(q_m-q_{m-1})
\ge
C(q_m)-C(q_{m-1}).
\]
Summing over $m=1,\dots,M$ gives
\[
\sum_{m=1}^M v_m\cdot(q_m-q_{m-1})
\ge
\sum_{m=1}^M \bigl(C(q_m)-C(q_{m-1})\bigr).
\]
The right-hand side telescopes to zero, so
\[
\sum_{m=1}^M v_m\cdot(q_m-q_{m-1})\ge 0.
\]
Reindexing the second term yields
\[
\sum_{m=1}^M q_m\cdot(v_m-v_{m+1})\ge 0.
\]
Substituting back $q_m=q(b_m)$ and $v_m=v(b_m)$ gives
\[
\sum_{m=1}^{M}
q(b_m)\cdot\bigl(v(b_m)-v(b_{m+1})\bigr)
\ge 0,
\]
which is cyclic monotonicity.

The final claim follows by contrapositive: if the observed conditional
choice-probability map violates one of these inequalities, then it cannot be
generated by any agent satisfying Definition~\ref{definition:perturbed-max}
using the elicited probabilities as the operative belief index.
\end{proof}

\subsection{Proof of Proposition~\ref{prop:single-mono}}
\begin{proof}
For a binary state, the expected-utility index for action $a$ at belief $p$ is
\[
v_a(p)
=
(1-p)u(a,0)+p u(a,1)
=
u(a,0)+p d_a ,
\]
where $d_a:=u(a,1)-u(a,0)$. Hence, for any $p,p'$,
\[
v(p')-v(p)=(p'-p)d .
\]
Define
\[
m(p):=q(p)\cdot d
=
\sum_{a=1}^{J}q_a(p)d_a .
\]

We first show that cyclic monotonicity with respect to $v(p)$ is equivalent to
weak monotonicity of $m(\cdot)$. Cyclic monotonicity requires that, for every
finite cycle $p_1,\ldots,p_M$ with $p_{M+1}=p_1$,
\[
\sum_{k=1}^{M}
q(p_k)\cdot\bigl(v(p_k)-v(p_{k+1})\bigr)
\ge 0 .
\]
Using
\[
v(p_k)-v(p_{k+1})=(p_k-p_{k+1})d,
\]
this condition is equivalent to
\[
\sum_{k=1}^{M}m(p_k)(p_k-p_{k+1})\ge 0 .
\]

We now prove that this one-dimensional cycle condition is equivalent to weak
monotonicity of $m$. First suppose the cycle condition holds. Consider any
$p<p'$ and apply the condition to the two-point cycle $(p,p')$. Then
\[
m(p)(p-p')+m(p')(p'-p)\ge 0 .
\]
Equivalently,
\[
(p'-p)\bigl(m(p')-m(p)\bigr)\ge 0 .
\]
Since $p'-p>0$, this implies
\[
m(p)\le m(p') .
\]
Thus $m$ is weakly increasing.

Conversely, suppose $m$ is weakly increasing. Let
$p_1,\ldots,p_M$ be any finite cycle with $p_{M+1}=p_1$. Since only finitely
many belief values appear in the cycle, list their distinct values as
\[
r_1<r_2<\cdots<r_L .
\]
Define a function $M$ on these grid points by setting $M(r_1)=0$ and, for
$i=1,\ldots,L-1$,
\[
M(r_{i+1})-M(r_i)
=
m(r_i)(r_{i+1}-r_i).
\]
Extend $M$ linearly on each interval $[r_i,r_{i+1}]$. Because $m$ is weakly
increasing, the slopes of this piecewise-linear function are weakly increasing.
Therefore $M$ is convex.

Moreover, for every grid point $r_i$, the number $m(r_i)$ is a subgradient of
$M$ at $r_i$. Indeed, the left slope at $r_i$ is weakly below $m(r_i)$ and the
right slope at $r_i$ is weakly above $m(r_i)$, with the natural one-sided
interpretation at the endpoints. Hence, for every $k$,
\[
M(p_{k+1})
\ge
M(p_k)+m(p_k)(p_{k+1}-p_k).
\]
Rearranging gives
\[
m(p_k)(p_k-p_{k+1})
\ge
M(p_k)-M(p_{k+1}).
\]
Summing over $k=1,\ldots,M$ yields
\[
\sum_{k=1}^{M}m(p_k)(p_k-p_{k+1})
\ge
\sum_{k=1}^{M}\bigl(M(p_k)-M(p_{k+1})\bigr)
=0,
\]
where the final equality follows from $p_{M+1}=p_1$. Thus the one-dimensional
cycle condition holds. Therefore, cyclic monotonicity with respect to $v(p)$
holds if and only if $m$ is weakly increasing.

It follows that cyclic monotonicity with respect to $v(p)$ holds if and only
if, for every $p<p'$,
\[
m(p)\le m(p') .
\]
Expanding the definition of $m$ gives
\[
\sum_{a=1}^{J}q_a(p)d_a
\le
\sum_{a=1}^{J}q_a(p')d_a .
\]
This proves the equivalence claimed in the first part of the proposition.

It remains to prove the strict part under
Definition~\ref{definition:perturbed-max}. Suppose the agent satisfies
Definition~\ref{definition:perturbed-max}. Let
\[
q:=q(p),
\qquad
q':=q(p').
\]
Optimality of $q$ at $v(p)$ and of $q'$ at $v(p')$ implies
\[
q\cdot v(p)-C(q)
\ge
q'\cdot v(p)-C(q')
\]
and
\[
q'\cdot v(p')-C(q')
\ge
q\cdot v(p')-C(q).
\]
Adding these two inequalities gives
\[
q\cdot v(p)+q'\cdot v(p')
\ge
q'\cdot v(p)+q\cdot v(p') .
\]
Rearranging,
\[
(q'-q)\cdot\bigl(v(p')-v(p)\bigr)\ge 0 .
\]
Since
\[
v(p')-v(p)=(p'-p)d
\]
and $p'-p>0$, this implies
\[
(q'-q)\cdot d\ge 0,
\]
or equivalently
\[
q'\cdot d\ge q\cdot d .
\]

Now suppose $q\neq q'$. For each fixed $v$, the objective
\[
\lambda\mapsto \lambda\cdot v-C(\lambda)
\]
is strictly concave on $\Delta(\mathcal A)$ because $C$ is strictly convex.
Therefore the optimizer at each fixed $v$ is unique. Since $q$ is the optimizer
at $v(p)$ and $q'\neq q$, the first optimality inequality is strict:
\[
q\cdot v(p)-C(q)
>
q'\cdot v(p)-C(q').
\]
Similarly, since $q'$ is the optimizer at $v(p')$ and $q\neq q'$, the second
optimality inequality is strict:
\[
q'\cdot v(p')-C(q')
>
q\cdot v(p')-C(q).
\]
Adding the strict inequalities yields
\[
(q'-q)\cdot\bigl(v(p')-v(p)\bigr)>0 .
\]
Again using $v(p')-v(p)=(p'-p)d$ and $p'-p>0$, we obtain
\[
q'\cdot d>q\cdot d .
\]
Thus the monotonicity inequality is strict whenever $q(p)\neq q(p')$.
\end{proof}

\subsection{Proof of Proposition~\ref{thm:lp-equivalence}}

\begin{proof}
We first prove the strict part. Suppose there exists a non-constant vector
$d'\in\mathbb R^J$ such that
$\widehat m_1(d')\le \cdots \le \widehat m_K(d')$ and
$\widehat m_k(d')<\widehat m_{k+1}(d')$ for every $k\in\mathcal S$. Since
$\mathcal S$ is finite and nonempty,
\[
\delta
:=
\min_{k\in\mathcal S}
\{\widehat m_{k+1}(d')-\widehat m_k(d')\}
>0.
\]
Let $\alpha:=\min_a d'_a$ and $\beta:=\max_a d'_a$. Since $d'$ is
non-constant, $\beta>\alpha$. Define
\[
\widetilde d_a
:=
\frac{d'_a-\alpha}{\beta-\alpha}.
\]
Then $\widetilde d\in[0,1]^J$, and there exist actions $a^\star,b^\star$ such
that $\widetilde d_{a^\star}=0$ and $\widetilde d_{b^\star}=1$. Because each
$\widehat q_k$ is a probability vector,
\[
\widehat m_k(\widetilde d)
=
\frac{\widehat m_k(d')-\alpha}{\beta-\alpha}.
\]
Thus the affine rescaling preserves all weak and strict inequalities, and the
minimum increment over $\mathcal S$ becomes
$\delta/(\beta-\alpha)>0$. Therefore $(\widetilde d,\gamma)$ is feasible for
$\mathrm{SignedLP}(a^\star,b^\star)$ for any
$0<\gamma\le \delta/(\beta-\alpha)$, so $\widehat\gamma>0$.

Conversely, suppose $\widehat\gamma>0$. Then for some ordered pair
$(a^\star,b^\star)$ there exists a feasible solution $(d,\gamma)$ to
$\mathrm{SignedLP}(a^\star,b^\star)$ with $\gamma>0$. Hence
\[
\widehat m_{k+1}(d)-\widehat m_k(d)\ge \gamma>0,
\qquad k\in\mathcal S.
\]
For $k\notin\mathcal S$, we have $\widehat q_k=\widehat q_{k+1}$, so
$\widehat m_{k+1}(d)-\widehat m_k(d)=0$. Therefore
$\widehat m_1(d)\le \cdots \le \widehat m_K(d)$, with strict inequality for
every $k\in\mathcal S$. Moreover, $d$ is non-constant because
$d_{a^\star}=0$ and $d_{b^\star}=1$.

The weak part is analogous. If there exists a non-constant $d'\in\mathbb R^J$
such that $\widehat m_1(d')\le \cdots \le \widehat m_K(d')$, then the same
affine rescaling produces a normalized vector $\widetilde d\in[0,1]^J$ attaining
both $0$ and $1$. For the corresponding ordered pair $(a^\star,b^\star)$,
\[
\widehat m_{k+1}(\widetilde d)-\widehat m_k(\widetilde d)\ge 0,
\qquad k\in\mathcal S,
\]
so $(\widetilde d,0)$ is feasible for
$\mathrm{SignedLP}(a^\star,b^\star)$ and hence $\widehat\gamma\ge 0$.
Conversely, if $\widehat\gamma\ge 0$, then some feasible solution satisfies
$\gamma\ge 0$, which implies weakly nonnegative increments on $\mathcal S$.
The increments outside $\mathcal S$ are identically zero, so
$\widehat m_1(d)\le \cdots \le \widehat m_K(d)$. Again, $d$ is non-constant
because it attains both $0$ and $1$.

It remains to show that $\widehat\gamma\in[-1,1]$. For any feasible
$d\in[0,1]^J$, each $\widehat m_k(d)$ lies in $[0,1]$ because
$\widehat q_k$ is a probability vector. Hence every adjacent increment
$\widehat m_{k+1}(d)-\widehat m_k(d)$ lies in $[-1,1]$, so any feasible
$\gamma$ satisfying the signed-margin constraints must satisfy $\gamma\le 1$.
For the lower bound, each signed LP is feasible with $\gamma=-1$, since all
adjacent increments are at least $-1$. Therefore each LP value lies in
$[-1,1]$, and taking the maximum over ordered pairs gives
$\widehat\gamma\in[-1,1]$.
\end{proof}

\subsection{Proof of Proposition~\ref{thm:joint_characterization}}

\begin{proof}
Let $B:=P_E(\theta\mid x)$ denote the elicited belief, viewed as a random
element of $\Delta(\Theta)$. For each belief value $b$, define
$q(b):=\bigl(\Pr_{\mathbb P}(a=a'\mid B=b)\big)_{a'\in\mathcal A}$.
For a utility function $u:\mathcal A\times\Theta\to\mathbb R$, define the
expected-utility index $v(b)\in\mathbb R^{|\mathcal A|}$ by
$v_a(b):=\sum_{\theta\in\Theta}b_\theta u(a,\theta)$.

We prove both directions.

First suppose there exists an agent satisfying
Definition~\ref{definition:perturbed-max}, with the strict-convexity requirement
on $C$ relaxed to convexity, whose subjective belief satisfies $P_S=B$ and
whose induced distribution over $(B,\theta,a)$ equals $\mathbb P$. The proof of
Proposition~\ref{thm:ci_rum} uses only that the agent's choice probabilities
depend on $x$ through its subjective belief, not strict convexity of $C$.
Therefore Proposition~\ref{thm:ci_rum} applies unchanged and implies
$a\perp\!\!\!\perp\theta\mid B$. The proof of
Proposition~\ref{prop:cyclic_monotonicity} uses only convexity of $C$, not
strict convexity, so it also applies unchanged under this relaxed version of
Definition~\ref{definition:perturbed-max}. Therefore
Proposition~\ref{prop:cyclic_monotonicity} implies that $q(B)$ is cyclically
monotone with respect to the utility index $v(B)$. Thus the two stated
conditions are necessary.

Conversely, suppose $a\perp\!\!\!\perp\theta\mid B$ under the observed
distribution $\mathbb P$, and suppose that $q(B)$ is cyclically monotone with
respect to $v_a(B)=\sum_{\theta\in\Theta}B_\theta u(a,\theta)$ for some nonzero
utility function $u:\mathcal A\times\Theta\to\mathbb R$.

We first construct a convex-perturbation representation of the conditional
choice rule $q$. By Rockafellar's cyclic-monotonicity theorem \citep{rockafellar1997convex}, cyclic
monotonicity of the relation $\{(v(B),q(B))\}$ implies that there exists a
convex potential $W:\mathbb R^{|\mathcal A|}\to\mathbb R\cup\{+\infty\}$ such
that $q(B)\in\partial W(v(B))$ for every elicited belief value $B$ in the
support of the observed distribution. By Fenchel duality,
$q(B)\in\partial W(v(B))$ if and only if $v(B)\in\partial W^\ast(q(B))$, where
$W^\ast$ is the convex conjugate of $W$. Equivalently,
\[
q(B)\in
\argmax_{\lambda\in\Delta(\mathcal A)}
\{\lambda\cdot v(B)-C(\lambda)\},
\]
where $C$ is the restriction of $W^\ast$ to $\Delta(\mathcal A)$. The function
$C$ is convex, but need not be strictly convex. Therefore the cyclic-monotonicity
hypothesis gives the perturbed-maximization representation
\[
q(B)
\in
\argmax_{\lambda\in\Delta(\mathcal A)}
\left\{
\sum_{a\in\mathcal A}\lambda_a
\left(
\sum_{\theta\in\Theta}B_\theta u(a,\theta)
\right)
-
C(\lambda)
\right\}.
\]

Now construct an agent as follows. The agent faces the same marginal
distribution over $(B,\theta)$ as in $\mathbb P$. Its subjective belief is
defined to be $P_S(\theta\mid x)=B=P_E(\theta\mid x)$. Conditional on $B$, the
agent uses the choice-probability vector $q(B)$ obtained above and draws its
realized action according to $\Pr_{\mathrm{agent}}(a\mid B)=q_a(B)$. By
construction, this agent satisfies Definition~\ref{definition:perturbed-max}
with nonzero utility function $u$ and with a convex, but not necessarily
strictly convex, regularizer $C$.

It remains to verify that the constructed agent reproduces the full observed
joint distribution over $(B,\theta,a)$. For any $B$, $\theta$, and $a$,
\[
\Pr_{\mathrm{agent}}(B,\theta,a)
=
\Pr_{\mathrm{agent}}(B,\theta)
\Pr_{\mathrm{agent}}(a\mid B,\theta).
\]
By construction,
\[
\Pr_{\mathrm{agent}}(B,\theta)=\Pr_{\mathbb P}(B,\theta).
\]
Moreover, the constructed action rule depends only on $B$, so
\[
\Pr_{\mathrm{agent}}(a\mid B,\theta)
=
\Pr_{\mathrm{agent}}(a\mid B)
=
q_a(B).
\]
By the conditional-independence assumption in the observed distribution,
\[
\Pr_{\mathbb P}(a\mid B,\theta)
=
\Pr_{\mathbb P}(a\mid B)
=
q_a(B).
\]
Therefore,
\[
\Pr_{\mathrm{agent}}(B,\theta,a)
=
\Pr_{\mathbb P}(B,\theta)q_a(B)
=
\Pr_{\mathbb P}(B,\theta)
\Pr_{\mathbb P}(a\mid B,\theta)
=
\Pr_{\mathbb P}(B,\theta,a).
\]
Hence the constructed agent has subjective belief $P_S=P_E$, satisfies the
perturbed-maximization representation with nonzero utility $u$ and convex, but
not necessarily strictly convex, regularizer $C$, and induces exactly the
observed distribution over $(P_E(\theta\mid x),\theta,a)$.

This proves the desired equivalence.
\end{proof}
\newpage
\section{Prompts}
\label{appendix:prompts}

This section provides the complete prompt templates used in our experiments. All prompts follow a consistent structure where clinical evidence is converted to natural language descriptions. We use placeholder notation: \texttt{<patient\_description>} denotes the natural language description of patient evidence (e.g., ``is male, is in the 50-64 age group, and QRS duration is prolonged''), and \texttt{<clinical\_question>} denotes the condition-specific question (e.g., ``have moderate or greater structural heart disease'').

\subsection{Probability Elicitation Prompts}
\label{appendix:prob_prompts}

\subsubsection{Standard Probability Elicitation (Prompt $\pi_0$)}
\label{appendix:standard_prob}

The standard prompt elicits $P_E(\theta \mid x)$ without any additional instructions about scoring rules or reasoning strategies.

\begin{quote}
\small
\texttt{There is a patient who <patient\_description>. What is the probability that they <clinical\_question>? Return probabilities for: No and Yes.}

\texttt{Respond with exactly 2 lines, one per label, and no extra text.}\\
\texttt{Each line MUST be exactly: '<label>: <number between 0 and 1>'.}\\
\texttt{Use these labels in this order: [No, Yes]}\\
\texttt{Example:}\\
\texttt{No: 0.50}\\
\texttt{Yes: 0.50}
\end{quote}

\subsubsection{MSE Scoring Rule Prompt ($\pi_{\text{MSE}}$)}
\label{appendix:mse_prompt}

This prompt includes an explicit statement that responses will be evaluated under the Mean Squared Error (Brier) scoring rule, which is strictly proper.

\begin{quote}
\small
\texttt{IMPORTANT: Your probability estimates will be evaluated according to the Mean Squared Error (MSE) scoring rule. This means the loss is calculated as the squared difference between your probability estimate and the true outcome. Please provide your best probability estimate.}

\texttt{There is a patient who <patient\_description>. What is the probability that they <clinical\_question>? Return probabilities for: No and Yes.}

\texttt{Respond with exactly 2 lines, one per label, and no extra text.}\\
\texttt{Each line MUST be exactly: '<label>: <number between 0 and 1>'.}\\
\texttt{Use these labels in this order: [No, Yes]}\\
\texttt{Example:}\\
\texttt{No: 0.50}\\
\texttt{Yes: 0.50}
\end{quote}

\subsubsection{Absolute Loss Scoring Rule Prompt ($\pi_{\text{ABS}}$)}
\label{appendix:absolute_prompt}

This prompt specifies evaluation under absolute loss.

\begin{quote}
\small
\texttt{IMPORTANT: Your probability estimates will be evaluated according to the Absolute Loss scoring rule. This means the loss is calculated as the absolute difference between your probability estimate and the true outcome. Please provide your best probability estimate.}

\texttt{There is a patient who <patient\_description>. What is the probability that they <clinical\_question>? Return probabilities for: No and Yes.}

\texttt{Respond with exactly 2 lines, one per label, and no extra text.}\\
\texttt{Each line MUST be exactly: '<label>: <number between 0 and 1>'.}\\
\texttt{Use these labels in this order: [No, Yes]}\\
\texttt{Example:}\\
\texttt{No: 0.50}\\
\texttt{Yes: 0.50}
\end{quote}

\subsubsection{Bayesian Reasoning Prompt ($\pi_{\text{Bayes}}$)}
\label{appendix:bayesian_prompt}

This prompt explicitly instructs the model to apply Bayesian reasoning when forming probability estimates.

\begin{quote}
\small
\texttt{IMPORTANT: Use proper Bayesian reasoning to reach your answer. Consider the prior probability of the condition in the general population, then update this prior based on the evidence. }

\texttt{There is a patient who <patient\_description>. What is the probability that they <clinical\_question>? Return probabilities for: No and Yes.}

\texttt{Respond with exactly 2 lines, one per label, and no extra text.}\\
\texttt{Each line MUST be exactly: '<label>: <number between 0 and 1>'.}\\
\texttt{Use these labels in this order: [No, Yes]}\\
\texttt{Example:}\\
\texttt{No: 0.50}\\
\texttt{Yes: 0.50}
\end{quote}

\subsection{Decision Elicitation Prompts}
\label{appendix:decision_prompts}

\subsubsection{Decision Prompt A: Without Loss Function}
\label{appendix:decision_a}

This prompt asks the model to make a diagnostic decision without specifying an explicit loss function. The model is asked both (1) whether it feels confident enough to make a decision (allowing for deferral) and (2) what decision it would make if forced to choose.

\begin{quote}
\small
\texttt{There is a patient who <patient\_description>.}

\texttt{Question: Do they <clinical\_question>?}

\texttt{Please respond with EXACTLY two lines:}\\
\texttt{Line 1: Can you make a decision about whether or not the patient is likely to <clinical\_question>? Answer 'Yes' or 'No'.}\\
\texttt{Line 2: If you had to make a decision, what would it be? Answer 'Yes' (the patient is likely to <clinical\_question>) or 'No' (the patient is likely to not <clinical\_question>)}

\texttt{Example format:}\\
\texttt{Can decide: No}\\
\texttt{Decision: Yes}
\end{quote}

\subsection{Internal Consistency (Law of Iterated Expectation) Prompts}
\label{appendix:lie_prompts}

For the internal consistency test described in Section~\ref{subsec:consistency}, we elicit three sets of probabilities. The base probability $P_E(\theta \mid x)$ uses the standard probability elicitation prompt above. The additional prompts are:

\subsubsection{Next-State Distribution Prompt: $P_E(z \in B_j \mid x)$}
\label{appendix:next_state_prompt}

This prompt elicits the model's belief about an auxiliary covariate $z$ given the base evidence $x$.

\begin{quote}
\small
\texttt{There is a patient who <patient\_description>. What is the probability distribution over their <auxiliary\_variable\_name>? Return probabilities for each category of <auxiliary\_variable\_name>.}

\texttt{Respond with exactly <K> lines, one per label, and no extra text.}\\
\texttt{Each line MUST be exactly: '<label>: <number between 0 and 1>'.}\\
\texttt{Use these labels in this order: [<state\_1>, <state\_2>, ..., <state\_K>]}\\
\texttt{Example:}\\
\texttt{<state\_1>: <1/K>}\\
\texttt{<state\_2>: <1/K>}\\
\texttt{...}
\end{quote}

\subsubsection{Conditional Probability Prompt: $P_E(\theta \mid x, z \in B_j)$}
\label{appendix:conditional_prompt}

This prompt elicits the model's belief in the target outcome $\theta$ given both the base evidence $x$ and a specific value of the auxiliary variable $z$.

\begin{quote}
\small
\texttt{There is a patient who <patient\_description> and <auxiliary\_variable\_condition>. What is the probability that they <clinical\_question>? Return probabilities for: No and Yes.}

\texttt{Respond with exactly 2 lines, one per label, and no extra text.}\\
\texttt{Each line MUST be exactly: '<label>: <number between 0 and 1>'.}\\
\texttt{Use these labels in this order: [No, Yes]}\\
\texttt{Example:}\\
\texttt{No: 0.50}\\
\texttt{Yes: 0.50}
\end{quote}

\subsection{Evidence-to-Language Conversion}
\label{appendix:evidence_conversion}

Patient evidence is converted to natural language using domain-specific clinical phrasing. For the structural heart disease dataset, examples include:

\begin{itemize}
    \item \textbf{Demographics:} ``is male,'' ``is in the 50-64 age group,'' ``is in an inpatient setting,'' ``is Hispanic/Latino''
    \item \textbf{ECG findings:} ``QRS duration is prolonged,'' ``QTc is not prolonged,'' ``ST--T abnormalities are present,'' ``ECG shows left ventricular hypertrophy''
    \item \textbf{Echocardiographic indicators:} ``LVEF $\leq$ 45\%,'' ``LV wall thickness $\geq$ 1.3 cm,'' ``moderate or greater aortic stenosis present''
\end{itemize}

For the pediatric Bayesian networks (fever and crying), evidence phrases are adapted to the relevant symptom domains (e.g., ``presents with jaundice,'' ``shows lethargy,'' ``has feeding difficulties,'' ``abdomen is distended'').

For the diabetes dataset, evidence includes lifestyle and clinical indicators (e.g., ``exercises regularly,'' ``has high glucose levels,'' ``BMI is in the obese range'').

\subsection{Clinical Questions by Dataset}
\label{appendix:clinical_questions}

The \texttt{<clinical\_question>} placeholder is instantiated as follows for each dataset:

\begin{itemize}
    \item \textbf{Structural Heart Disease:} ``have moderate or greater structural heart disease''
    \item \textbf{Diabetes:} ``have diabetes/pre-diabetes''
    \item \textbf{Fever (Pediatric):} ``have a fever meeting the threshold ($\geq 99$°F oral or $\geq 100$°F rectal)'' 
    \item \textbf{Infant Crying:} ``have colic'' 
\end{itemize}
\section{Prompting Analysis}

\subsection{Prompt Consistency}
In Table~\ref{tab:prompt_consistency_rmse}, we measure the standard deviation of elicited beliefs under the standard prompt and the RMSE deviation in elicited beliefs under an alternative MSE-framed prompt. The alternative prompt tells the LLM that the scoring rule will be mean squared error.

\begin{table}[t]
\centering
\caption{\textbf{Prompt-consistency metrics.}
Within-prompt standard deviation under the standard prompt ($\pi_0$), and pairwise RMSE between elicited probabilities under the MSE-framed prompt and $\pi_0$.}
\label{tab:prompt_consistency_rmse}

{\small
\setlength{\tabcolsep}{4pt}          
\renewcommand{\arraystretch}{0.92}   
\setlength{\aboverulesep}{0.6pt}     
\setlength{\belowrulesep}{0.6pt}
\setlength{\cmidrulekern}{1.0pt}

\begin{tabular}{l c c}
\toprule
& \multicolumn{1}{c}{\textbf{$\pi_0$}} 
& \multicolumn{1}{c}{\textbf{RMSE: MSE vs $\pi_0$}} \\
\cmidrule(lr){2-2} \cmidrule(lr){3-3}
\textbf{Dataset/Model} & \textbf{Std} & \textbf{MSE} \\
\midrule
Heart--GPT-Min   & .0786 & .0915 \\
Heart--GPT-High  & .0591 & .0456 \\
Heart--Llama     & .1592 & .2403 \\
Heart--DeepSeek  & .0781 & .0829 \\
\midrule
Cry--GPT-Min     & .1696 & .1454 \\
Cry--GPT-High    & .0968 & .0808 \\
Cry--Llama       & .0986 & .2090 \\
Cry--DeepSeek    & .0948 & .0885 \\
\midrule
Fever--GPT-Min   & .1469 & .1242 \\
Fever--GPT-High  & .1003 & .0675 \\
Fever--Llama     & .0880 & .1716 \\
Fever--DeepSeek  & .0890 & .0929 \\
\midrule
Diab--GPT-Min    & .0451 & .0724 \\
Diab--GPT-High   & .0996 & .0860 \\
Diab--Llama      & .0404 & .2520 \\
Diab--DeepSeek   & .1004 & .1137 \\
\midrule
\textbf{Mean: GPT-Min}  & \textbf{.1100} & \textbf{.1084} \\
\textbf{Mean: GPT-High} & \textbf{.0889} & \textbf{.0700} \\
\textbf{Mean: Llama}    & \textbf{.0965} & \textbf{.2182} \\
\textbf{Mean: DeepSeek} & \textbf{.0905} & \textbf{.0945} \\
\midrule
\textbf{Overall mean}   & \textbf{.0965} & \textbf{.1228} \\
\bottomrule
\end{tabular}
} 
\end{table}

\newpage
\section{Heart Disease Dataset}
\label{appendix:heart}

\subsubsection{Data Source}
The heart disease dataset is derived from electrocardiogram (ECG) and echocardiogram records from Columbia University Medical Center \citep{Elias_Finer_2025}. The dataset contains over 100,000 patient encounters with paired ECG and echocardiographic measurements. Clinical variable thresholds follow the EchoNext v1.1.0 guidelines.

\subsubsection{Target Variable}
The target variable is \textbf{Structural Heart Disease (SHD)}, defined as the presence of moderate-or-greater structural heart disease.

\subsubsection{Covariate Variables}

\paragraph{Demographic Variables (4 variables):}
\begin{itemize}
    \item \textbf{Age}: Three categories---$<50$, $50$--$69$, $\geq 70$ years
    \item \textbf{Sex}: Male, Female
    \item \textbf{Location Setting}: Inpatient, Outpatient, Emergency Department, Procedural
    \item \textbf{Race/Ethnicity}: Asian, Black, Hispanic, White, Other, Unknown
\end{itemize}

\paragraph{ECG Measurements}
\begin{itemize}
    \item \textbf{QRS\_Prolonged}: Whether QRS duration is prolonged (Yes/No)
    \item \textbf{QTc\_Prolonged}: Whether corrected QT interval is prolonged (Yes/No)
    \item \textbf{STT\_Abnormal}: Whether ST--T wave abnormalities are present (Yes/No)
    \item \textbf{ECG\_LVH}: Whether ECG shows left ventricular hypertrophy (Yes/No)
\end{itemize}

\paragraph{ECG Indicated Conditions}
\begin{itemize}
    \item Left ventricular ejection fraction (LVEF) $\leq 45\%$
    \item Left ventricular wall thickness $\geq 1.3$ cm
    \item Moderate or greater aortic stenosis
    \item Moderate or greater aortic regurgitation
    \item Moderate or greater mitral regurgitation
    \item Moderate or greater tricuspid regurgitation
    \item Moderate or greater pulmonary regurgitation
    \item Moderate or greater right ventricular systolic dysfunction
    \item Moderate or large pericardial effusion
    \item Pulmonary artery systolic pressure (PASP) $\geq 45$ mmHg
    \item Tricuspid regurgitation maximum velocity (TR V$_{\max}$) $\geq 3.2$ m/s
\end{itemize}

\subsubsection{Bayesian Network Structure}

The Bayesian network contains 20 nodes and 121 directed edges, organized in a three-tier hierarchical structure:

\begin{enumerate}
    \item \textbf{Root nodes}: The four demographic variables (Age, Sex, Location Setting, Race/Ethnicity) serve as root nodes with no parents.
    
    \item \textbf{ECG layer}: The four ECG variables are conditionally dependent on all demographic variables. Additionally, there are dependencies among ECG variables: ECG\_LVH influences QTc\_Prolonged, STT\_Abnormal, and QRS\_Prolonged; STT\_Abnormal influences QTc\_Prolonged and QRS\_Prolonged; and QTc\_Prolonged influences QRS\_Prolonged.
    
    \item \textbf{Echocardiographic indicator layer}: Each of the 11 echocardiographic indicator flags depends on all demographic variables and all ECG variables.
    
    \item \textbf{Target node}: SHD is a direct child of all 11 echocardiographic indicator flags 
\end{enumerate}

\subsubsection{Ground-Truth Probability Computation}

Conditional probability tables (CPTs) are estimated empirically from the dataset using maximum likelihood estimation. For each combination of parent variable states, we compute the empirical frequency of each child state. To ensure reliable probability estimates, we enforce a minimum support threshold of 100 patients for each covariate configuration. Covariate combinations with fewer than 100 observations are excluded from the evaluation to avoid high-variance probability estimates.

Ground-truth posterior probabilities $P^*(\theta \mid x)$ are computed via variable elimination using the \texttt{pgmpy} library \citep{Ankan2024}. For the experiments, we sample test cases to achieve approximately uniform coverage across the probability range $[0, 1]$ by stratifying into 20 equal-width bins.

\subsubsection{Clinical Phrasing Examples}

Evidence is converted to natural language for LLM prompts. Example phrasings include:
\begin{itemize}
    \item ``is male, is in the 50--69 age group, is in an inpatient setting, and QRS duration is prolonged''
    \item ``is female, is in the $\geq$70 age group, is Hispanic/Latino, and ST--T abnormalities are present''
    \item ``is in the emergency department, ECG shows left ventricular hypertrophy, and LVEF $\leq$ 45\%''
\end{itemize}

The clinical question for the target variable is phrased as: ``have moderate or greater structural heart disease.''
\newpage
\section{Diabetes Dataset}
\label{appendix:diabetes}

\subsubsection{Data Source}
The diabetes dataset is derived from the CDC Behavioral Risk Factor Surveillance System (BRFSS), a large-scale telephone survey that collects health-related risk behaviors, chronic health conditions, and use of preventive services from U.S. residents \citep{cdc_2017}. The dataset contains self-reported health indicators, lifestyle factors, and demographic information commonly used in diabetes risk prediction models.

\subsubsection{Target Variable}
The target variable is \textbf{Diabetes\_binary}, indicating self-reported diabetes or prediabetes status:
\begin{itemize}
    \item \textbf{1}: Prediabetes or diabetes (respondent reports having been told by a doctor that they have diabetes or prediabetes)
    \item \textbf{0}: No diabetes reported
\end{itemize}

\subsubsection{Covariate Variables}

The dataset contains 21 covariate variables organized into five categories:

\paragraph{Demographic Variables (4 variables):}
\begin{itemize}
    \item \textbf{Sex}: Female (0), Male (1)
    \item \textbf{Age}: Four categories---Young adult (approximately 18--34 years), Early middle adulthood (approximately 35--49 years), Late middle adulthood (approximately 50--64 years), Older adult (65+ years)
    \item \textbf{Education}: Six levels---No schooling/kindergarten only (1), Elementary education (2), Some high school (3), High school graduate/GED (4), Some college/technical school (5), College graduate or higher (6)
    \item \textbf{Income}: Four categories---Low ($<$\$25,000), Lower-middle (\$25,000--\$49,999), Higher-middle (\$50,000--\$74,999), High ($\geq$\$75,000)
\end{itemize}

\paragraph{Cardiometabolic and Comorbidity History (6 variables):}
\begin{itemize}
    \item \textbf{HighBP}: History of high blood pressure (Yes/No)
    \item \textbf{HighChol}: History of high cholesterol (Yes/No)
    \item \textbf{CholCheck}: Cholesterol check within the past five years (Yes/No)
    \item \textbf{Stroke}: History of stroke (Yes/No)
    \item \textbf{HeartDiseaseorAttack}: History of coronary heart disease or myocardial infarction (Yes/No)
    \item \textbf{BMI}: Body mass index category---Underweight (BMI $<18.5$), Normal (18.5 $\leq$ BMI $<25$), Overweight (25 $\leq$ BMI $<30$), Obese (BMI $\geq 30$)
\end{itemize}

\paragraph{Lifestyle and Behavioral Variables (5 variables):}
\begin{itemize}
    \item \textbf{Smoker}: Smoked at least 100 cigarettes in lifetime (Yes/No)
    \item \textbf{PhysActivity}: Physical activity during the past 30 days (Yes/No)
    \item \textbf{Fruits}: Fruit consumption one or more times per day (Yes/No)
    \item \textbf{Veggies}: Vegetable consumption one or more times per day (Yes/No)
    \item \textbf{HvyAlcoholConsump}: Heavy alcohol consumption (Yes/No)
\end{itemize}

\paragraph{Healthcare Access Variables (2 variables):}
\begin{itemize}
    \item \textbf{AnyHealthcare}: Access to health care coverage (Yes/No)
    \item \textbf{NoDocbcCost}: Unable to see a doctor due to cost in the past 12 months (Yes/No)
\end{itemize}

\paragraph{Self-Reported Health Status Variables (4 variables):}
\begin{itemize}
    \item \textbf{GenHlth}: Self-rated general health---Excellent (1), Very good (2), Good (3), Fair (4), Poor (5)
    \item \textbf{MentHlth}: Days of poor mental health in the past 30 days---$<$7 days (0), 7--13 days (1), 14--20 days (2), 21+ days (3)
    \item \textbf{PhysHlth}: Days of poor physical health in the past 30 days---$<$7 days (0), 7--13 days (1), 14--20 days (2), 21+ days (3)
    \item \textbf{DiffWalk}: Serious difficulty walking or climbing stairs (Yes/No)
\end{itemize}

\subsubsection{Bayesian Network Structure}

The Bayesian network contains 22 nodes and 77 directed edges. Unlike the hierarchical structure of the heart disease network, the diabetes network exhibits a more complex web of interdependencies reflecting the multifactorial nature of diabetes risk. Key structural features include:

\begin{enumerate}
    \item \textbf{Self-rated health as a hub}: \texttt{GenHlth} (self-rated general health) serves as a central hub node with edges to 11 other variables, including direct connections to \texttt{Diabetes\_binary}, \texttt{HighBP}, \texttt{HighChol}, \texttt{BMI}, \texttt{HeartDiseaseorAttack}, \texttt{Stroke}, and \texttt{DiffWalk}.
    
    \item \textbf{Functional status pathway}: \texttt{DiffWalk} (difficulty walking) connects to \texttt{Diabetes\_binary}, \texttt{HighBP}, \texttt{Smoker}, \texttt{Stroke}, and demographic variables.
    
    \item \textbf{Cardiovascular cascade}: \texttt{HighBP} $\to$ \texttt{HighChol} $\to$ \texttt{HeartDiseaseorAttack} $\to$ \texttt{Stroke} forms a causal chain.
    
    \item \textbf{Lifestyle clustering}: Diet variables (\texttt{Fruits}, \texttt{Veggies}) influence \texttt{PhysActivity}, which in turn affects \texttt{DiffWalk}.
    
    \item \textbf{Socioeconomic pathways}: \texttt{Education} $\to$ \texttt{Income} $\to$ \texttt{GenHlth} and \texttt{NoDocbcCost} $\to$ \texttt{AnyHealthcare} capture socioeconomic determinants of health.
    
    \item \textbf{Diabetes as both cause and effect}: \texttt{Diabetes\_binary} has incoming edges from \texttt{GenHlth}, \texttt{DiffWalk}, \texttt{Sex}, and \texttt{Income}, while also having outgoing edges to \texttt{HighBP}, \texttt{HighChol}, \texttt{BMI}, and \texttt{Age} (reflecting that diabetes diagnosis may alter other health behaviors and conditions).
\end{enumerate}

\subsubsection{Ground-Truth Probability Computation}

Conditional probability tables are estimated empirically from the BRFSS survey data using maximum likelihood estimation. We enforce a minimum support threshold of 100 respondents for each covariate configuration to ensure reliable probability estimates. Ground-truth posterior probabilities $P^*(\theta \mid x)$ are computed via variable elimination using the \texttt{pgmpy} library \citep{Ankan2024}.

\subsubsection{Clinical Phrasing Examples}

Evidence is converted to natural language for LLM prompts. Example phrasings include:
\begin{itemize}
    \item ``is Male, is in the late middle adulthood age group (approximately 50--64 years), has a history of high blood pressure, and is Obese (BMI $\geq$ 30)''
    \item ``is Female, has household income below \$25,000, has no daily fruit consumption, and self-rated general health is fair''
    \item ``has access to health care coverage, has physical activity during the past 30 days, and has no history of stroke''
\end{itemize}

The clinical question for the target variable is phrased as: ``have prediabetes or diabetes.''

\section{Parameter Details}
For Random Forest: With five repetitions per context $x_i$, we use grouped cross-validation (group $=$ $x_i$) so repetitions of the same $x_i$ never span train and test folds to be sure we prevent any leakage. We limit model depth to 6 with 10 minimum nodes per leaf to control for overfitting.
\newpage
\section{Effect of Isotonic Calibration on Conditional Independence Tests}
\label{app:isotonic_cmi}

To evaluate whether probability calibration can mitigate violations of belief sufficiency, we repeated the conditional-independence analysis using isotonic regression to post-process elicited belief probabilities. Specifically, we replaced the raw elicited belief $p$ with its isotonic-calibrated counterpart $p_{\text{iso}}$ and re-estimated the conditional mutual information $I(A;\theta \mid p_{\text{iso}})$ using the same kNN-based estimator and bootstrap procedure.

Table~\ref{tab:cmi_raw_iso} reports side-by-side comparisons of CMI estimates obtained from raw and isotonic-calibrated beliefs. Across all datasets and models, the qualitative conclusions remain unchanged: conditional mutual information remains substantially greater than zero, indicating persistent violations of the conditional-independence null hypothesis $H_0: A \perp \theta \mid p$. In many cases, isotonic calibration increases the estimated CMI magnitude, reflecting improved marginal calibration without eliminating residual dependence between actions and ground truth after conditioning on elicited beliefs.

These results suggest that miscalibration alone does not account for the observed belief insufficiency. Instead, the dependence appears to arise from structural mismatches between elicited beliefs and the internal decision-relevant representations used by the models. Consequently, monotonic post-hoc calibration methods such as isotonic regression are insufficient to restore conditional independence in this setting.

\begin{table*}[t]
\centering
\caption{\textbf{Conditional-independence (belief sufficiency) tests: kNN conditional mutual information (CMI) for raw vs.\ isotonic-calibrated elicited beliefs.}
For each dataset/model pair, we report kNN-based estimates of conditional mutual information $I(A;\theta \mid p)$ using (i) the raw elicited belief $p$ and (ii) the isotonic-calibrated belief $p_{\text{iso}}$, each with 95\% confidence intervals. Both correspond to the conditional-independence null hypothesis $H_0: I(A;\theta \mid p)=0$ (equivalently, $A \perp \theta \mid p$).}
\label{tab:cmi_raw_iso}
\setlength{\tabcolsep}{5.0pt}
\begin{tabular}{l r l @{\hspace{6pt}\vrule\hspace{6pt}} r l}
\toprule
& \multicolumn{2}{c}{\textbf{CMI (Raw $p$)}} 
& \multicolumn{2}{c}{\textbf{CMI (Isotonic $p_{\text{iso}}$)}} \\
\cmidrule(lr){2-3} \cmidrule(lr){4-5}
\textbf{Dataset / Model} & \textbf{CMI} & \textbf{95\% CI} & \textbf{CMI} & \textbf{95\% CI} \\
\midrule
Heart--GPT-Min     & 0.1454 & [0.1119, 0.1789] & 0.3088 & [0.2711, 0.3464] \\
Heart--GPT-High    & 0.0753 & [0.0422, 0.1085] & 0.3295 & [0.2967, 0.3624] \\
Heart--Llama       & 0.0718 & [0.0365, 0.1070] & 0.1011 & [0.0650, 0.1373] \\
Heart--DeepSeek    & 0.0675 & [0.0354, 0.0997] & 0.1948 & [0.1604, 0.2291] \\
\midrule
Cry--GPT-Min       & 0.2232 & [0.1874, 0.2589] & 0.2589 & [0.2247, 0.2931] \\
Cry--GPT-High      & 0.1901 & [0.1390, 0.2412] & 0.2002 & [0.1499, 0.2505] \\
Cry--Llama         & 0.4223 & [0.3764, 0.4681] & 0.5857 & [0.5340, 0.6375] \\
Cry--DeepSeek      & 0.1745 & [0.1265, 0.2225] & 0.2110 & [0.1673, 0.2547] \\
\midrule
Fever--GPT-Min     & 0.1446 & [0.1128, 0.1765] & 0.1918 & [0.1593, 0.2242] \\
Fever--GPT-High    & 0.0944 & [0.0564, 0.1323] & 0.1112 & [0.0718, 0.1505] \\
Fever--Llama       & 0.3289 & [0.2893, 0.3686] & 0.5584 & [0.5167, 0.6001] \\
Fever--DeepSeek    & 0.2060 & [0.1663, 0.2456] & 0.2407 & [0.1947, 0.2867] \\
\midrule
Diab--GPT-Min      & 0.0193 & [0.0129, 0.0258] & 0.0300 & [0.0222, 0.0378]   \\
Diab--GPT-High     & 0.0461 & [0.0182, 0.0740] &  0.0340 & [0.0013, 0.0667]  \\
Diab--Llama        & 0.2695 & [0.2357, 0.3033] & 0.4521 & [0.4044, 0.4998]  \\
Diab--DeepSeek     & 0.0351 & [0.0169, 0.0533] & 0.0353 & [0.0162, 0.0544]  \\
\bottomrule
\end{tabular}
\end{table*}

\newpage

\section{Compute}
No GPUs were used. We accessed LLMs through API Keys, incurring roughly 10,000 in API fees. A standard CPU was able to process all data within minutes.

\newpage
\input{checklist.tex}

\end{document}

%% file: math_commands.tex
\usepackage{amsmath,amsfonts,bm}

\def\eqref#1{equation~\ref{#1}}

\def\1{\bm{1}}

\DeclareMathAlphabet{\mathsfit}{\encodingdefault}{\sfdefault}{m}{sl}
\SetMathAlphabet{\mathsfit}{bold}{\encodingdefault}{\sfdefault}{bx}{n}

\DeclareMathOperator*{\argmax}{arg\,max}

%% file: checklist.tex
\section*{NeurIPS Paper Checklist}

The checklist is designed to encourage best practices for responsible machine learning research, addressing issues of reproducibility, transparency, research ethics, and societal impact. Do not remove the checklist: {\bf The papers not including the checklist will be desk rejected.} The checklist should follow the references and follow the (optional) supplemental material.  The checklist does NOT count towards the page
limit. 

Please read the checklist guidelines carefully for information on how to answer these questions. For each question in the checklist:
\begin{itemize}
    \item You should answer \answerYes{}, \answerNo{}, or \answerNA{}.
    \item \answerNA{} means either that the question is Not Applicable for that particular paper or the relevant information is Not Available.
    \item Please provide a short (1--2 sentence) justification right after your answer (even for \answerNA). 
\end{itemize}

{\bf The checklist answers are an integral part of your paper submission.} They are visible to the reviewers, area chairs, senior area chairs, and ethics reviewers. You will also be asked to include it (after eventual revisions) with the final version of your paper, and its final version will be published with the paper.

The reviewers of your paper will be asked to use the checklist as one of the factors in their evaluation. While \answerYes{} is generally preferable to \answerNo{}, it is perfectly acceptable to answer \answerNo{} provided a proper justification is given (e.g., error bars are not reported because it would be too computationally expensive'' or ``we were unable to find the license for the dataset we used''). In general, answering \answerNo{} or \answerNA{} is not grounds for rejection. While the questions are phrased in a binary way, we acknowledge that the true answer is often more nuanced, so please just use your best judgment and write a justification to elaborate. All supporting evidence can appear either in the main paper or the supplemental material, provided in appendix. If you answer \answerYes{} to a question, in the justification please point to the section(s) where related material for the question can be found.

IMPORTANT, please:
\begin{itemize}
    \item {\bf Delete this instruction block, but keep the section heading ``NeurIPS Paper Checklist"},
    \item  {\bf Keep the checklist subsection headings, questions/answers and guidelines below.}
    \item {\bf Do not modify the questions and only use the provided macros for your answers}.
\end{itemize}


\begin{enumerate}

\item {\bf Claims}
    \item[] Question: Do the main claims made in the abstract and introduction accurately reflect the paper's contributions and scope?
    \item[] Answer: \answerYes{} 
    \item[] Justification: Claims made are statistically proven and empirically validated.
    \item[] Guidelines:
    \begin{itemize}
        \item The answer \answerNA{} means that the abstract and introduction do not include the claims made in the paper.
        \item The abstract and/or introduction should clearly state the claims made, including the contributions made in the paper and important assumptions and limitations. A \answerNo{} or \answerNA{} answer to this question will not be perceived well by the reviewers. 
        \item The claims made should match theoretical and experimental results, and reflect how much the results can be expected to generalize to other settings. 
        \item It is fine to include aspirational goals as motivation as long as it is clear that these goals are not attained by the paper. 
    \end{itemize}

\item {\bf Limitations}
    \item[] Question: Does the paper discuss the limitations of the work performed by the authors?
    \item[] Answer: \answerYes{} 
    \item[] Justification: Limitations are discussed and sensitivity analysis is conducted in order to mitigate those limitations.
    \item[] Guidelines:
    \begin{itemize}
        \item The answer \answerNA{} means that the paper has no limitation while the answer \answerNo{} means that the paper has limitations, but those are not discussed in the paper. 
        \item The authors are encouraged to create a separate ``Limitations'' section in their paper.
        \item The paper should point out any strong assumptions and how robust the results are to violations of these assumptions (e.g., independence assumptions, noiseless settings, model well-specification, asymptotic approximations only holding locally). The authors should reflect on how these assumptions might be violated in practice and what the implications would be.
        \item The authors should reflect on the scope of the claims made, e.g., if the approach was only tested on a few datasets or with a few runs. In general, empirical results often depend on implicit assumptions, which should be articulated.
        \item The authors should reflect on the factors that influence the performance of the approach. For example, a facial recognition algorithm may perform poorly when image resolution is low or images are taken in low lighting. Or a speech-to-text system might not be used reliably to provide closed captions for online lectures because it fails to handle technical jargon.
        \item The authors should discuss the computational efficiency of the proposed algorithms and how they scale with dataset size.
        \item If applicable, the authors should discuss possible limitations of their approach to address problems of privacy and fairness.
        \item While the authors might fear that complete honesty about limitations might be used by reviewers as grounds for rejection, a worse outcome might be that reviewers discover limitations that aren't acknowledged in the paper. The authors should use their best judgment and recognize that individual actions in favor of transparency play an important role in developing norms that preserve the integrity of the community. Reviewers will be specifically instructed to not penalize honesty concerning limitations.
    \end{itemize}

\item {\bf Theory assumptions and proofs}
    \item[] Question: For each theoretical result, does the paper provide the full set of assumptions and a complete (and correct) proof?
    \item[] Answer: \answerYes{} 
    \item[] Justification: We provide a full set of assumptions and proofs for all theoretical claims.
    \item[] Guidelines:
    \begin{itemize}
        \item The answer \answerNA{} means that the paper does not include theoretical results. 
        \item All the theorems, formulas, and proofs in the paper should be numbered and cross-referenced.
        \item All assumptions should be clearly stated or referenced in the statement of any theorems.
        \item The proofs can either appear in the main paper or the supplemental material, but if they appear in the supplemental material, the authors are encouraged to provide a short proof sketch to provide intuition. 
        \item Inversely, any informal proof provided in the core of the paper should be complemented by formal proofs provided in appendix or supplemental material.
        \item Theorems and Lemmas that the proof relies upon should be properly referenced. 
    \end{itemize}

    \item {\bf Experimental result reproducibility}
    \item[] Question: Does the paper fully disclose all the information needed to reproduce the main experimental results of the paper to the extent that it affects the main claims and/or conclusions of the paper (regardless of whether the code and data are provided or not)?
    \item[] Answer: \answerYes{} 
    \item[] Justification: We disclose all prompts in the Appendix, as well as release the full code in supplementary materials.
    \item[] Guidelines:
    \begin{itemize}
        \item The answer \answerNA{} means that the paper does not include experiments.
        \item If the paper includes experiments, a \answerNo{} answer to this question will not be perceived well by the reviewers: Making the paper reproducible is important, regardless of whether the code and data are provided or not.
        \item If the contribution is a dataset and\slash or model, the authors should describe the steps taken to make their results reproducible or verifiable. 
        \item Depending on the contribution, reproducibility can be accomplished in various ways. For example, if the contribution is a novel architecture, describing the architecture fully might suffice, or if the contribution is a specific model and empirical evaluation, it may be necessary to either make it possible for others to replicate the model with the same dataset, or provide access to the model. In general. releasing code and data is often one good way to accomplish this, but reproducibility can also be provided via detailed instructions for how to replicate the results, access to a hosted model (e.g., in the case of a large language model), releasing of a model checkpoint, or other means that are appropriate to the research performed.
        \item While NeurIPS does not require releasing code, the conference does require all submissions to provide some reasonable avenue for reproducibility, which may depend on the nature of the contribution. For example
        \begin{enumerate}
            \item If the contribution is primarily a new algorithm, the paper should make it clear how to reproduce that algorithm.
            \item If the contribution is primarily a new model architecture, the paper should describe the architecture clearly and fully.
            \item If the contribution is a new model (e.g., a large language model), then there should either be a way to access this model for reproducing the results or a way to reproduce the model (e.g., with an open-source dataset or instructions for how to construct the dataset).
            \item We recognize that reproducibility may be tricky in some cases, in which case authors are welcome to describe the particular way they provide for reproducibility. In the case of closed-source models, it may be that access to the model is limited in some way (e.g., to registered users), but it should be possible for other researchers to have some path to reproducing or verifying the results.
        \end{enumerate}
    \end{itemize}

\item {\bf Open access to data and code}
    \item[] Question: Does the paper provide open access to the data and code, with sufficient instructions to faithfully reproduce the main experimental results, as described in supplemental material?
    \item[] Answer: \answerYes{} 
    \item[] Justification: We give open access to code through the supplementary materials as well access to datasets needed to reproduce all major paper claims. There are two clinician datasets currently pending release per data governance restrictions; however, we believe the datasets we do release provide enough to reproduce all claims. 
    \item[] Guidelines:
    \begin{itemize}
        \item The answer \answerNA{} means that paper does not include experiments requiring code.
        \item Please see the NeurIPS code and data submission guidelines (\url{https://neurips.cc/public/guides/CodeSubmissionPolicy}) for more details.
        \item While we encourage the release of code and data, we understand that this might not be possible, so \answerNo{} is an acceptable answer. Papers cannot be rejected simply for not including code, unless this is central to the contribution (e.g., for a new open-source benchmark).
        \item The instructions should contain the exact command and environment needed to run to reproduce the results. See the NeurIPS code and data submission guidelines (\url{https://neurips.cc/public/guides/CodeSubmissionPolicy}) for more details.
        \item The authors should provide instructions on data access and preparation, including how to access the raw data, preprocessed data, intermediate data, and generated data, etc.
        \item The authors should provide scripts to reproduce all experimental results for the new proposed method and baselines. If only a subset of experiments are reproducible, they should state which ones are omitted from the script and why.
        \item At submission time, to preserve anonymity, the authors should release anonymized versions (if applicable).
        \item Providing as much information as possible in supplemental material (appended to the paper) is recommended, but including URLs to data and code is permitted.
    \end{itemize}

\item {\bf Experimental setting/details}
    \item[] Question: Does the paper specify all the training and test details (e.g., data splits, hyperparameters, how they were chosen, type of optimizer) necessary to understand the results?
    \item[] Answer: \answerYes{} 
    \item[] Justification: These details are sufficiently discussed in the Appendix and supplementary code. 
    \item[] Guidelines:
    \begin{itemize}
        \item The answer \answerNA{} means that the paper does not include experiments.
        \item The experimental setting should be presented in the core of the paper to a level of detail that is necessary to appreciate the results and make sense of them.
        \item The full details can be provided either with the code, in appendix, or as supplemental material.
    \end{itemize}

\item {\bf Experiment statistical significance}
    \item[] Question: Does the paper report error bars suitably and correctly defined or other appropriate information about the statistical significance of the experiments?
    \item[] Answer: \answerYes{} 
    \item[] Justification: We show bootstrap 95 percent confidence intervals. 
    \item[] Guidelines:
    \begin{itemize}
        \item The answer \answerNA{} means that the paper does not include experiments.
        \item The authors should answer \answerYes{} if the results are accompanied by error bars, confidence intervals, or statistical significance tests, at least for the experiments that support the main claims of the paper.
        \item The factors of variability that the error bars are capturing should be clearly stated (for example, train/test split, initialization, random drawing of some parameter, or overall run with given experimental conditions).
        \item The method for calculating the error bars should be explained (closed form formula, call to a library function, bootstrap, etc.)
        \item The assumptions made should be given (e.g., Normally distributed errors).
        \item It should be clear whether the error bar is the standard deviation or the standard error of the mean.
        \item It is OK to report 1-sigma error bars, but one should state it. The authors should preferably report a 2-sigma error bar than state that they have a 96\% CI, if the hypothesis of Normality of errors is not verified.
        \item For asymmetric distributions, the authors should be careful not to show in tables or figures symmetric error bars that would yield results that are out of range (e.g., negative error rates).
        \item If error bars are reported in tables or plots, the authors should explain in the text how they were calculated and reference the corresponding figures or tables in the text.
    \end{itemize}

\item {\bf Experiments compute resources}
    \item[] Question: For each experiment, does the paper provide sufficient information on the computer resources (type of compute workers, memory, time of execution) needed to reproduce the experiments?
    \item[] Answer: \answerYes{} 
    \item[] Justification: We have a compute section in the appendix where these details are disclosed.
    \item[] Guidelines:
    \begin{itemize}
        \item The answer \answerNA{} means that the paper does not include experiments.
        \item The paper should indicate the type of compute workers CPU or GPU, internal cluster, or cloud provider, including relevant memory and storage.
        \item The paper should provide the amount of compute required for each of the individual experimental runs as well as estimate the total compute. 
        \item The paper should disclose whether the full research project required more compute than the experiments reported in the paper (e.g., preliminary or failed experiments that didn't make it into the paper). 
    \end{itemize}
    
\item {\bf Code of ethics}
    \item[] Question: Does the research conducted in the paper conform, in every respect, with the NeurIPS Code of Ethics \url{https://neurips.cc/public/EthicsGuidelines}?
    \item[] Answer: \answerYes{} 
    \item[] Justification: Our paper does not violate the code of ethics.
    \item[] Guidelines:
    \begin{itemize}
        \item The answer \answerNA{} means that the authors have not reviewed the NeurIPS Code of Ethics.
        \item If the authors answer \answerNo, they should explain the special circumstances that require a deviation from the Code of Ethics.
        \item The authors should make sure to preserve anonymity (e.g., if there is a special consideration due to laws or regulations in their jurisdiction).
    \end{itemize}

\item {\bf Broader impacts}
    \item[] Question: Does the paper discuss both potential positive societal impacts and negative societal impacts of the work performed?
    \item[] Answer: \answerYes{} 
    \item[] Justification: Societal impacts of the work are discussed (e.g. benefits for settings such as medicine). 
    \item[] Guidelines:
    \begin{itemize}
        \item The answer \answerNA{} means that there is no societal impact of the work performed.
        \item If the authors answer \answerNA{} or \answerNo, they should explain why their work has no societal impact or why the paper does not address societal impact.
        \item Examples of negative societal impacts include potential malicious or unintended uses (e.g., disinformation, generating fake profiles, surveillance), fairness considerations (e.g., deployment of technologies that could make decisions that unfairly impact specific groups), privacy considerations, and security considerations.
        \item The conference expects that many papers will be foundational research and not tied to particular applications, let alone deployments. However, if there is a direct path to any negative applications, the authors should point it out. For example, it is legitimate to point out that an improvement in the quality of generative models could be used to generate Deepfakes for disinformation. On the other hand, it is not needed to point out that a generic algorithm for optimizing neural networks could enable people to train models that generate Deepfakes faster.
        \item The authors should consider possible harms that could arise when the technology is being used as intended and functioning correctly, harms that could arise when the technology is being used as intended but gives incorrect results, and harms following from (intentional or unintentional) misuse of the technology.
        \item If there are negative societal impacts, the authors could also discuss possible mitigation strategies (e.g., gated release of models, providing defenses in addition to attacks, mechanisms for monitoring misuse, mechanisms to monitor how a system learns from feedback over time, improving the efficiency and accessibility of ML).
    \end{itemize}
    
\item {\bf Safeguards}
    \item[] Question: Does the paper describe safeguards that have been put in place for responsible release of data or models that have a high risk for misuse (e.g., pre-trained language models, image generators, or scraped datasets)?
    \item[] Answer: \answerNA{} 
    \item[] Justification: The paper poses no such risks.
    \item[] Guidelines:
    \begin{itemize}
        \item The answer \answerNA{} means that the paper poses no such risks.
        \item Released models that have a high risk for misuse or dual-use should be released with necessary safeguards to allow for controlled use of the model, for example by requiring that users adhere to usage guidelines or restrictions to access the model or implementing safety filters. 
        \item Datasets that have been scraped from the Internet could pose safety risks. The authors should describe how they avoided releasing unsafe images.
        \item We recognize that providing effective safeguards is challenging, and many papers do not require this, but we encourage authors to take this into account and make a best faith effort.
    \end{itemize}

\item {\bf Licenses for existing assets}
    \item[] Question: Are the creators or original owners of assets (e.g., code, data, models), used in the paper, properly credited and are the license and terms of use explicitly mentioned and properly respected?
    \item[] Answer: \answerYes{} 
    \item[] Justification: Originators of used assets such as data are cited.
    \item[] Guidelines:
    \begin{itemize}
        \item The answer \answerNA{} means that the paper does not use existing assets.
        \item The authors should cite the original paper that produced the code package or dataset.
        \item The authors should state which version of the asset is used and, if possible, include a URL.
        \item The name of the license (e.g., CC-BY 4.0) should be included for each asset.
        \item For scraped data from a particular source (e.g., website), the copyright and terms of service of that source should be provided.
        \item If assets are released, the license, copyright information, and terms of use in the package should be provided. For popular datasets, \url{paperswithcode.com/datasets} has curated licenses for some datasets. Their licensing guide can help determine the license of a dataset.
        \item For existing datasets that are re-packaged, both the original license and the license of the derived asset (if it has changed) should be provided.
        \item If this information is not available online, the authors are encouraged to reach out to the asset's creators.
    \end{itemize}

\item {\bf New assets}
    \item[] Question: Are new assets introduced in the paper well documented and is the documentation provided alongside the assets?
    \item[] Answer: \answerYes{} 
    \item[] Justification: There is full documentation for all attached code in the Appendix and Supplementary Materials.
    \item[] Guidelines:
    \begin{itemize}
        \item The answer \answerNA{} means that the paper does not release new assets.
        \item Researchers should communicate the details of the dataset\slash code\slash model as part of their submissions via structured templates. This includes details about training, license, limitations, etc. 
        \item The paper should discuss whether and how consent was obtained from people whose asset is used.
        \item At submission time, remember to anonymize your assets (if applicable). You can either create an anonymized URL or include an anonymized zip file.
    \end{itemize}

\item {\bf Crowdsourcing and research with human subjects}
    \item[] Question: For crowdsourcing experiments and research with human subjects, does the paper include the full text of instructions given to participants and screenshots, if applicable, as well as details about compensation (if any)? 
    \item[] Answer: \answerNA{} 
    \item[] Justification: The paper does not involve crowdsourcing nor research with human subjects.
    \item[] Guidelines:
    \begin{itemize}
        \item The answer \answerNA{} means that the paper does not involve crowdsourcing nor research with human subjects.
        \item Including this information in the supplemental material is fine, but if the main contribution of the paper involves human subjects, then as much detail as possible should be included in the main paper. 
        \item According to the NeurIPS Code of Ethics, workers involved in data collection, curation, or other labor should be paid at least the minimum wage in the country of the data collector. 
    \end{itemize}

\item {\bf Institutional review board (IRB) approvals or equivalent for research with human subjects}
    \item[] Question: Does the paper describe potential risks incurred by study participants, whether such risks were disclosed to the subjects, and whether Institutional Review Board (IRB) approvals (or an equivalent approval/review based on the requirements of your country or institution) were obtained?
    \item[] Answer: \answerNA{} 
    \item[] Justification: The paper does not involve crowdsourcing nor research with human subjects.
    \item[] Guidelines:
    \begin{itemize}
        \item The answer \answerNA{} means that the paper does not involve crowdsourcing nor research with human subjects.
        \item Depending on the country in which research is conducted, IRB approval (or equivalent) may be required for any human subjects research. If you obtained IRB approval, you should clearly state this in the paper. 
        \item We recognize that the procedures for this may vary significantly between institutions and locations, and we expect authors to adhere to the NeurIPS Code of Ethics and the guidelines for their institution. 
        \item For initial submissions, do not include any information that would break anonymity (if applicable), such as the institution conducting the review.
    \end{itemize}

\item {\bf Declaration of LLM usage}
    \item[] Question: Does the paper describe the usage of LLMs if it is an important, original, or non-standard component of the core methods in this research? Note that if the LLM is used only for writing, editing, or formatting purposes and does \emph{not} impact the core methodology, scientific rigor, or originality of the research, declaration is not required.
    \item[] Answer: \answerNA{} 
    \item[] Justification: The core method development in this research does not involve LLMs as any important, original, or non-standard components.
    \item[] Guidelines:
    \begin{itemize}
        \item The answer \answerNA{} means that the core method development in this research does not involve LLMs as any important, original, or non-standard components.
        \item Please refer to our LLM policy in the NeurIPS handbook for what should or should not be described.
    \end{itemize}

\end{enumerate}